\documentclass[11pt]{article}

\usepackage[margin=1in]{geometry}
\usepackage[numbers,sort&compress]{natbib}
\usepackage{microtype}
\usepackage{amsmath,amsfonts,amssymb}
\usepackage{amsthm}
\usepackage{graphicx}
\usepackage{subfigure}
\usepackage{booktabs}
\usepackage{caption}
\usepackage{multirow}
\usepackage{wrapfig}
\usepackage{float}
\usepackage{xspace}
\usepackage{comment}
\usepackage[inline]{enumitem}
\setlist[enumerate]{itemsep=0pt,topsep=0pt,leftmargin=*}

\usepackage{amsmath,amsfonts,bm}

\newcommand{\captiona}{{\em (a)}}
\newcommand{\captionb}{{\em (b)}}
\newcommand{\captionc}{{\em (c)}}

\def\eqref#1{equation~\ref{#1}}

\def\1{\bm{1}}

\DeclareMathAlphabet{\mathsfit}{\encodingdefault}{\sfdefault}{m}{sl}
\SetMathAlphabet{\mathsfit}{bold}{\encodingdefault}{\sfdefault}{bx}{n}

\usepackage{algorithm}
\usepackage{algpseudocode}
\newtheorem{proposition}{Proposition}
\usepackage[dvipsnames]{xcolor}
\usepackage{hyperref}
\makeatletter
\providecommand{\theHALG@line}{\thealgorithm.\arabic{ALG@line}}
\makeatother
\usepackage[capitalise]{cleveref}
\usepackage[compact]{titlesec}
\title{FoldAttention: Declared-Reference Softmax for Fast Decode and Deterministic Backward}

\author{%
  \begin{tabular}{c}
    Sriman Achanta\textsuperscript{1}
  \end{tabular}
  \\
  {\small
    \textsuperscript{1}Virginia Commonwealth University
  }
  \\
  \texttt{achantass@vcu.edu}
}

\date{}

\begin{document}

\maketitle

\begin{abstract}
Autoregressive decode repeatedly streams a growing KV cache, making attention
a major cost at long context. Existing high-performance kernels use online
softmax, which discovers a row's normalization reference as it scans keys.
Earlier contributions therefore remain provisional and may require rescaling.
We argue that the reference need not be discovered: softmax is invariant to a
common shift, so the reference only has to keep the weights in range. We
present FoldAttention, an additive formulation of softmax attention that
fixes a finite reference $Z_i$ before scanning the KV cache. Each weight
$2^{s_{ij}-Z_i}$ is then final when computed, so contributions add across
disjoint key ranges and their quotient equals softmax attention in real
arithmetic. We use this property to develop two techniques for Hopper decode:
(1) final weights gate key and value reads before the bytes are fetched, and
a per-call depth $T$ cuts keys below $2^{-T}$ while keeping their mass, and
(2) additive partials compose split KV and shared-prefix cascades without
rescaling. On H100 at $T=16$, FoldAttention decodes seven real-model
generations 1.36--2.30$\times$ faster than the fastest BF16 baseline, and up
to 3.09$\times$ faster across MHA and GQA shapes, at an error within 1.5\% of
the lowest BF16 error on six of the seven; reading every key, it is
1.14--1.30$\times$ faster at matched error. We validate on Qwen3-8B that a
whole decode step is up to 1.46$\times$ faster while likelihood and
long-context accuracy match those under BF16 kernels. The same principle
makes the backward deterministic: CTAs round bounded partial gradients onto
an integer grid declared before the reduction and add them in any order.
FoldAttention thereby removes the determinism tax: its deterministic backward
is up to 1.84$\times$ faster than deterministic FlashAttention-3/4 and
1.05$\times$ faster than the fastest nondeterministic kernel.
\end{abstract}

\section{Introduction}
\label{sec:intro}

\begin{figure}[t]
\centering
\includegraphics[width=\linewidth]{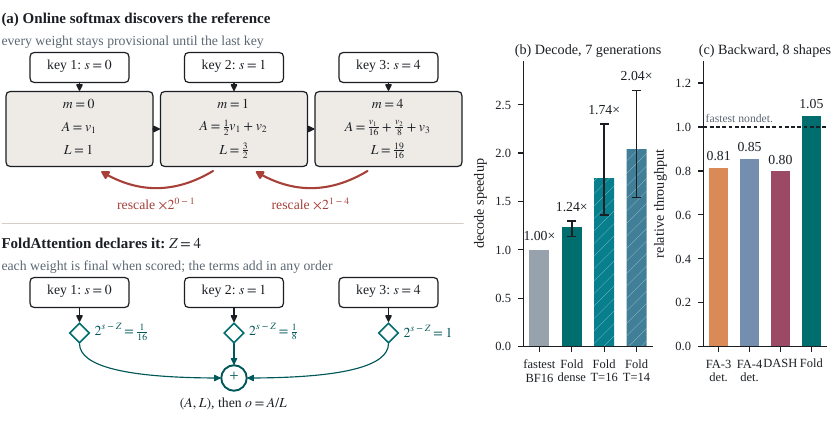}
\caption{\captiona\ Online softmax rescales $A$ and $L$ each time the
running maximum rises; with $Z=4$ fixed first, each weight is final and the
terms add in any order. \captionb\ Decode speedup over the fastest BF16
kernel on seven generations (geometric mean and range). \captionc\
Deterministic backward throughput relative to the fastest nondeterministic
kernel, over eight shapes.}
\label{fig:hero}
\end{figure}

Autoregressive decode reads the KV cache to produce one token per request. It
is therefore bound by memory bandwidth, and its cost grows with context and
batch size. At long context, attention alone takes up to half of a decode
step~\citep{yi2026pat}. Training has a separate bottleneck in the attention
backward. Many thread blocks (CTAs) contribute to each gradient, and their
arrival order determines the result's bits. Reproducing a training run
requires those bits to remain fixed~\citep{qiang2026dash,he2025nondeterminism}.

High-performance kernels, FlashAttention and its successors among them, use
online
softmax~\citep{milakov2018online,dao2022flashattention,dao2023flashattention2,shah2024flashattention3,zadouri2026flashattention4}.
Each key tile is weighted relative to a running maximum, and a later increase
rescales the accumulated state.
No weight is final until the last key is scored. Measured against the running
maximum, which each cache split restarts, a weight only bounds its final
value, so reads can be skipped only conservatively and split partials merge
only by comparing maxima and rescaling. Backward reductions have an analogous
dependency: FP32 addition rounds relative to the running total, so the arrival
order of CTAs determines the bits. Deterministic modes pay a
\emph{determinism tax} to impose an order: up to 38\% of throughput in
FlashAttention-3 and 25\% in
FlashAttention-4~\citep{qiang2026dash,zadouri2026flashattention4}.

Both dependencies arise because a numerical scale is discovered during
parallel work. FoldAttention instead declares the scale before that work
begins. For decode, softmax invariance means that the reference only has to
keep weights inside the exponent range of FP32 accumulation and BF16
tensor-core operands. A cheap per-row estimate lands within 29 binades of the
true log-sum-exp on every row we measured, up to 128K keys. Prior work fixes
or freezes a reference to reduce
rescaling~\citep{hong2024flashdecodingpp,zadouri2026flashattention4,sun2026vfa},
or drops blocks below a pseudo-maximum together with their
mass~\citep{liu2026ffd}. FoldAttention uses a declared reference more broadly:
splits share it, final weights gate reads, and cut keys still contribute to
the denominator. For backward, every CTA derives the same declared rounding
grid, so partial gradients add as integers in any order.

We present FoldAttention, which fixes a finite reference $Z_i$ for each
query row before any key is scanned (Figure~\ref{fig:hero}a). We contribute:
\begin{enumerate}
\item \textbf{One additive contract} (Section~\ref{sec:contract}) for
split-KV and cascades, exact in real arithmetic with finite-precision range
bounds, and an order-free integer reduction for backward.
\item \textbf{A decode kernel that reads by final weight}
(Section~\ref{sec:decode}). Keys are two INT8 planes with a per-key scale;
the first plane gives each key's final weight, which decides whether the
second plane and the value row are read. A depth $T$, set per call on the
same cache, cuts keys below $2^{-T}$ while keeping their mass, a dial from
BF16-kernel accuracy to fewer bytes. Shared prefixes are read once for every
request that holds them.
\item \textbf{A deterministic backward at nondeterministic speed}
(Section~\ref{sec:backward}). CTAs round their partials onto a common
power-of-two grid, folded for $dQ$ into the softmax's own exponential, and
add integers in any order.
\end{enumerate}

Against tuned FlashAttention-3/4, FlashInfer, TensorRT-LLM
XQA~\citep{nvidia2026tensorrtllm}, cuDNN~\citep{nvidia2026cudnn}, and
DASH~\citep{qiang2026dash} on an H100, scored against one FP32 reference,
FoldAttention at depth 16 decodes seven multi-step generations of
Qwen3-30B-A3B~\citep{yang2025qwen3}, gpt-oss-20b~\citep{openai2025gptoss},
and GLM-4-9B~\citep{glm2024chatglm} 1.36--2.30$\times$ faster than the
fastest BF16 kernel. Its output error is within 1.5\% of the lowest BF16
error in six cases, and it reaches 3.09$\times$ on a 1K--32K sweep of MHA
and GQA shapes. On Qwen3-8B, a whole decode step is 1.13--1.46$\times$
faster, and likelihood, retrieval, and LongBench accuracy match those under
BF16 kernels from 8K to 128K context. The deterministic backward is up to 1.84$\times$ faster
than deterministic
FlashAttention-3/4 and 1.05$\times$ faster than the fastest nondeterministic
kernel, with bit-identical gradients across runs, batches, and packings.

We open source FoldAttention with a permissive Apache-2.0 license. The code is available at
\href{https://github.com/srimanachanta/fold-attention}{https://github.com/srimanachanta/fold-attention}.
\section{Background}
\label{sec:background}

\paragraph{Online softmax and its merge.}
Let $q_i$ be a query row, $k_j,v_j$ the keys and values, and
$s_{ij}=\log_2(e)\,q_i^\top k_j/\sqrt{D}$ the score in base 2, so
attention~\citep{vaswani2017attention} returns
$o_i=\sum_j 2^{s_{ij}}v_j\big/\sum_j 2^{s_{ij}}$. Online
softmax~\citep{milakov2018online} keeps a running maximum $m$, numerator $A$,
and denominator $L$, and when a tile raises $m$ to $m'$ multiplies $A$ and
$L$ by $2^{m-m'}$ before adding the tile's terms $2^{s_{ij}-m'}$. Split-KV
decode~\citep{dao2023flashdecoding} returns a partial state $(o,\ell)$ per
split, with $\ell=\log L+m$, and merges two as
\begin{equation}
(o,\ell)\oplus(o',\ell')=
\Bigl(\tfrac{e^{\ell}o+e^{\ell'}o'}{e^{\ell}+e^{\ell'}},\;
\log\bigl(e^{\ell}+e^{\ell'}\bigr)\Bigr),
\label{eq:lse-merge}
\end{equation}
as FlashInfer, LeanAttention, and Hydragen do for split-KV, cascades, and
shared prefixes~\citep{ye2025flashinfer,sanovar2024leanattention,juravsky2024hydragen}.
The merge is evaluated stably only by comparing $\ell$ and $\ell'$ and
rescaling one side, the same step the scan takes.

\paragraph{Decode traffic and gradient reduction.}
A BF16 decode step reads 512 bytes of key and value per cached token at
$D=128$. The
fastest BF16 kernel already reaches 93--99\% of the H100's measured bandwidth
at 16K keys and beyond, so a faster decode must read fewer bytes. FP8
caches~\citep{micikevicius2022fp8} halve this traffic but raise the output's
FP32-relative error by 27--133$\times$ in our benchmarks. In backward, with
$D_i=dO_i^\top o_i$ and $dS=P\circ(dO\,V^\top-D)$, each CTA holds a key block
and adds a partial into every row of $dQ=dS\,K$ it touches. FP32 addition is
not associative, so with atomic adds the bits of $dQ$ follow the CTAs'
finishing order, and deterministic modes serialize the
adds~\citep{qiang2026dash,zadouri2026flashattention4}.

\section{The additive contract}
\label{sec:contract}

\paragraph{Fixed-reference softmax.}
Fix a finite reference $Z_i$ for query row $i$ before any key is read. For a
set of keys $I$ define
\begin{equation}
F_i(I)=(A_i^I,L_i^I)=\Bigl(\sum_{j\in I}2^{s_{ij}-Z_i}v_j,\;
\sum_{j\in I}2^{s_{ij}-Z_i}\Bigr),\qquad
o_i=A_i^I/L_i^I .
\label{eq:fold}
\end{equation}
Every term depends on one score and on $Z_i$, and on nothing seen before or
after it.

\begin{proposition}[Exact additive softmax]
\label{prop:fold}
Let $Z_i$ be finite. \textup{(i)} For disjoint key sets,
$F_i(I)+F_i(J)=F_i(I\cup J)$. \textup{(ii)} $A_i/L_i$ equals the softmax
attention output of row $i$ over the represented scores and values, for
every $Z_i$. \textup{(iii)} Let the weights be evaluated and accumulated in
FP32 with flush to zero, over $n\le2^{22}$ keys with $m_i=\max_j s_{ij}$
and $\nu=\max(1,\max_{j,d}|v_{jd}|)$. If
$m_i-Z_i+\log_2(n\nu)<127$, no weight, numerator, or denominator overflows,
and flushed weights change $L_i$ by a relative amount of at most
$n\,2^{-126-(m_i-Z_i)}$.
\end{proposition}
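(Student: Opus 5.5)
Parts (i) and (ii) are algebraic and I would dispatch them first. For (i), each component of $F_i(I)$ is a sum over $j\in I$ of a term that depends only on $s_{ij}$, $v_j$ and the fixed $Z_i$, so for disjoint $I,J$ the sum over $I\cup J$ splits as the sum over $I$ plus the sum over $J$. This holds componentwise for the numerator vector and for the denominator. For (ii), I would factor the common constant $2^{-Z_i}$, which is finite and positive since $Z_i$ is finite, out of both $A_i$ and $L_i$. The quotient is then $\sum_j 2^{s_{ij}}v_j/\sum_j 2^{s_{ij}}$, which is the base-2 form of softmax attention from Section~\ref{sec:background}. The denominator is positive because every term is positive, so the quotient is well defined for every $Z_i$.

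For (iii), set $\delta=m_i-Z_i$. The overflow bounds go term by term. Each weight satisfies $2^{s_{ij}-Z_i}\le 2^{\delta}$, and each numerator coordinate of a term is at most $2^{\delta}\nu$ in absolute value. Any partial sum of at most $n$ terms is therefore bounded by $n2^{\delta}$ for $L_i$ and by $n\nu2^{\delta}$ for each coordinate of $A_i$. The hypothesis gives $\delta+\log_2(n\nu)<127$, so both bounds are below $2^{127}$, and the largest finite FP32 value is $(2-2^{-23})2^{127}$.

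The main obstacle is that these bounds hold for the exact partial sums, while the accumulation is done in FP32 with rounding. I would handle it by bounding the computed partial sums by the exact ones times $(1+u)^n$, with $u=2^{-24}$. The condition $n\le2^{22}$ gives $nu\le 1/4$, so the growth factor is at most $e^{1/4}<2$. I then need the strict inequality to leave a factor-$2$ margin, meaning a true value below $2^{127}$ becomes a computed value below $2^{128}$. Rounding a value below $2^{127}$ can at worst reach $2^{127}$, which is still finite, so a careful statement is that every computed quantity stays below $2^{128}$. I expect the constant $2^{22}$ was chosen for exactly this purpose. Individual weight evaluation via \texttt{exp2} likewise stays below $2^{\delta}(1+u)$.

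For the flush bound, a weight is flushed only if its value is below $2^{-126}$, the smallest FP32 normal. At most $n$ weights are flushed, so at most $n2^{-126}$ of mass is lost from $L_i$. The maximal key contributes exactly $2^{\delta}$, so $L_i\ge 2^{\delta}$, and the relative change is at most $n2^{-126-\delta}=n\,2^{-126-(m_i-Z_i)}$. If flush-to-zero also applies to subnormal partial sums, I would note that partial sums are at least as large as the largest included weight; a subnormal sum arises only from terms that are themselves below $2^{-126}$, so the total loss is still at most $n2^{-126}$.
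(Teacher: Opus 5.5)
Your proposal follows the paper's proof essentially step for step. Parts (i) and (ii) are termwise splitting and factoring out $2^{-Z_i}$. The overflow argument is the termwise bound $2^{m_i-Z_i}$ together with the recursive-summation factor $(1+2^{-24})^{2^{22}}<e^{1/4}<2$. The flush bound is at most $n\,2^{-126}$ of lost mass against $L_i\ge 2^{m_i-Z_i}$.

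The one step to state more precisely is the signed numerator. There the $(1+u)^n$ factor should multiply $\sum_j 2^{s_{ij}-Z_i}|v_{jd}|$, which is what your bound $n\nu 2^{\delta}$ actually controls, not the exact signed partial sum. This is how the paper phrases it.
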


Appendix~\ref{app:proofs} gives the proof. Where online softmax merges
partial states with Equation~\ref{eq:lse-merge}, these pairs merge with $+$:
no maximum is compared and no finished term is rescaled, so split-KV,
shared-prefix cascades, and warp completion order are all groupings of one
sum. The decode kernel applies the identity to inputs its gates choose
(Section~\ref{sec:decode}), and Section~\ref{sec:evaluation} measures its
error against FP32.

\paragraph{Choosing the reference.}
Part (iii) is the only condition $Z_i$ must meet. The BF16 operand that
carries each weight into the tensor core has FP32's exponent range, so the
condition is the same for the product. $Z_i$ need not be the maximum or a
bound on it. An estimate too high or too low scales every weight by the same
factor, which cancels in $A/L$, so only range matters: with $n\le2^{22}$ keys
and values below $2^{16}$, any $Z_i$ within about 80 binades of the row's
largest score keeps every weight and sum in range and loses less than
$2^{-24}$ of $L_i$ to flushing.

FoldAttention estimates $Z_i$ per row and step as a log-sum-exp from 192
keys scored with the kernel's own coarse logits: the attention
sink~\citep{xiao2024streamingllm} and the 63 most recent keys exactly, plus a
trimmed stratified estimate from 128 stratum centers over the rest, with no
state carried between steps. The true log-sum-exp sits $-0.4$ to $17.2$
binades above $Z_i$ over every row of the seven generations of
Section~\ref{sec:evaluation}, and $-0.9$ to $28.2$ in Qwen3-8B at 64K--128K
context: more than an FP16 weight holds, which is why a fixed FP16 reference
needs a fallback~\citep{hong2024flashdecodingpp}, and far inside BF16's range
(Figure~\ref{fig:headroom}; Appendix~\ref{app:decode}). A cache can also certify the range after each
step: a row whose $L_i=2^{\ell_i-Z_i}$, with $\ell_i$ its base-2
log-sum-exp, leaves $[2^{-1},2^{100}]$, or whose output is not finite, is
decoded again with $Z_i=\ell_i$. The check syncs with the host, so timed runs
omit it; across 1.8 billion rows decoded in Qwen3-8B it would rerun 118,
each with $Z_i$ less than 1.6 binades above $\ell_i$.

\begin{figure}[t]
\centering
\includegraphics[width=\linewidth]{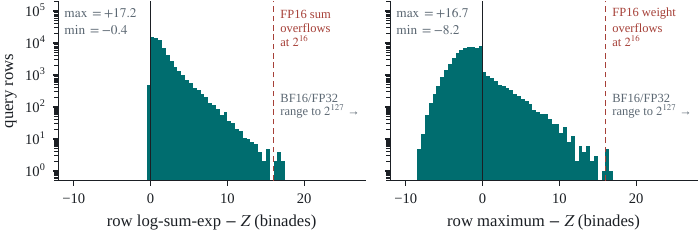}
\caption{How far each row's true log-sum-exp (left) and largest score
(right) sit above the estimated $Z_i$, over 63{,}488 rows of the seven
generations. FP16 weights would overflow at the right edge; BF16 and FP32
extend to $+127$.}
\label{fig:headroom}
\end{figure}

\paragraph{Integer-grid gradients.}
The backward needs the same declaration one level down: a rounding scale
fixed before the parallel reduction, so arrival order cannot change the sum.
It adds partials $g_c$ from $C$ CTAs into one gradient element.
Given a bound $\sum_c|g_c|\le B$, choose the power of two
$\alpha=2^{b-\lceil\log_2 B\rceil}$ and sum integers:
\begin{equation}
\hat g_c=\operatorname{RNE}(\alpha g_c),\qquad
g=\alpha^{-1}\textstyle\sum_c \hat g_c .
\label{eq:grid}
\end{equation}

\begin{proposition}[Order-free gradient sums]
\label{prop:grid}
If $C<2^{b+1}$, the $\hat g_c$ summed in $(b+2)$-bit two's-complement
arithmetic give the same $g$ in every order and grouping, no partial sum
overflows, and $|g-\sum_c g_c|\le C\,B\,2^{-b}$.
\end{proposition}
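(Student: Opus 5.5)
The plan is to reduce everything to exact integer arithmetic, checking three things in order: the scale $\alpha$ puts the scaled partials in a bounded range; the rounded integers and all their partial sums fit in $b+2$ bits; and then order-independence and the error bound follow directly.

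\textbf{Step 1: the scale.} Since $\lceil\log_2 B\rceil\ge\log_2 B$, we have $\alpha B = 2^{b-\lceil\log_2 B\rceil}B\le 2^b$. For $B>0$ we also have $\lceil\log_2 B\rceil<\log_2 B+1$, hence $\alpha^{-1}<2B\,2^{-b}$. The degenerate case $B=0$ forces every $g_c=0$ and is trivial, so I set it aside.

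Because $\alpha$ is a power of two, the product $\alpha g_c$ is computed exactly in floating point, barring underflow. I would state this as a standing assumption, since it is the natural setting. The only rounding in the scheme is then the explicit $\operatorname{RNE}$ in Equation~\ref{eq:grid}. Likewise, multiplying the integer sum by $\alpha^{-1}$ is exact.

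\textbf{Step 2: no overflow.} This is the step that needs the most care, because rounding can push the integers outward. Write $\hat g_c=\alpha g_c+\varepsilon_c$ with $|\varepsilon_c|\le 1/2$.

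Take any subset $S$ of CTAs, which covers every partial sum in any order or grouping. Then
\begin{equation*}
\Bigl|\sum_{c\in S}\hat g_c\Bigr|\le \alpha\sum_c|g_c|+\tfrac{|S|}{2}\le 2^b+\tfrac{C}{2}<2^b+2^b=2^{b+1},
\end{equation*}
using $C<2^{b+1}$. A partial sum is an integer strictly below $2^{b+1}$ in magnitude, so it lies in $[-(2^{b+1}-1),\,2^{b+1}-1]$. This interval is contained in the $(b+2)$-bit two's-complement range $[-2^{b+1},2^{b+1}-1]$. Hence no partial sum overflows.

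\textbf{Step 3: order-freedom.} Without overflow, two's-complement addition coincides with integer addition, which is associative and commutative. Every order and grouping therefore yields the same integer $\sum_c\hat g_c$, and hence the same $g$ after the exact rescaling by $\alpha^{-1}$. Even with wraparound, modular addition would be associative; the range bound is what guarantees that the final representative is the true sum.

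\textbf{Step 4: error.} Using the two facts from Step 1,
\begin{equation*}
\Bigl|g-\sum_c g_c\Bigr|=\alpha^{-1}\Bigl|\sum_c\varepsilon_c\Bigr|\le\alpha^{-1}\tfrac{C}{2}<2B\,2^{-b}\cdot\tfrac{C}{2}=C\,B\,2^{-b}.
\end{equation*}
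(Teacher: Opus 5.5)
Your proposal is correct and follows the paper's proof step for step. You bound $\alpha\le 2^b/B$ and $\alpha^{-1}<2B\,2^{-b}$ from the two sides of the ceiling, bound every subset sum of the $\hat g_c$ by $2^b+C/2<2^{b+1}$, use the exactness of non-overflowing integer addition, and finish with $C/(2\alpha)<C\,B\,2^{-b}$. Your additions, the $B=0$ case and the remark that wraparound addition is itself associative, are harmless extras that the paper's proof does not need.
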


This is pre-rounding in the sense of reproducible
summation~\citep{demmel2013fast,ahrens2020reprotoms}; what attention adds is
cheap bounds. With $P$ a row-stochastic matrix, $\|v_j\|\le C_D\max|V|$ and
Cauchy--Schwarz, the terms of one $dQ$ element satisfy
\begin{equation}
\textstyle\sum_j|dS_{ij}K_{jd}|\le\bigl(\max_i\|dO_i\|\,C_D\max|V|+\max_i|D_i|\bigr)\max|K|,
\label{eq:dq-bound}
\end{equation}
where $C_D$ is $\sqrt D$ rounded up to a power of two and the maxima run over
one request's rows and one KV head. Every CTA's partial is a subset of these
terms, so the right side bounds $\sum_c|g_c|$. The kernel widens it by
$2^{-6}$ for the BF16 rounding of $P$ and $dS$. A preprocess reduces the
maxima with order-free integer maxima over their bit patterns, and every CTA
evaluates the bound with the same instructions, so all arrive at the same
grid: like $Z_i$, it is declared before the reduction, not discovered by it.

\section{Decode kernel}
\label{sec:decode}

FoldAttention uses a paged cache~\citep{kwon2023pagedattention}, a per-step
front kernel that writes the new token and estimates $Z_i$, and a split-KV
kernel. The fixed reference governs every read decision in the split-KV
kernel.
Figure~\ref{fig:decode-kernel} shows one CTA's work on one tile.

\begin{figure}[t]
\centering
\includegraphics[width=\linewidth]{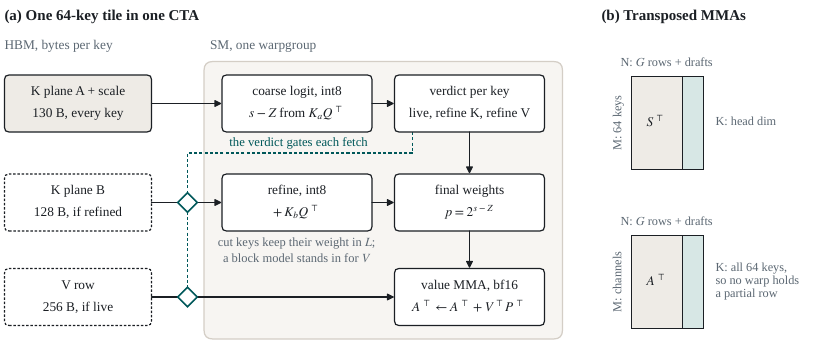}
\caption{\captiona\ One CTA on a 64-key tile, bytes per key at $D=128$: plane
A and the key's scale give its score against $Z_i$, and the verdict, ORed
over the GQA group, gates the reads of plane B and the value row.
\captionb\ Keys or channels sit on $M$ and the group's query rows (and any
draft rows) on $N$, so the value product reduces over a whole tile.}
\label{fig:decode-kernel}
\end{figure}

\paragraph{Cache format.}
Keys and queries are rotated by an orthonormal Hadamard matrix, which keeps
every $q^\top k$ and spreads outlier channels~\citep{ashkboos2024quarot}.
Each key is stored as two INT8 planes, $k_j=e_j(a_j+b_j/256)$, with a BF16
scale $e_j$ per key rounded up so plane A never saturates. A key's code
therefore depends on that key alone: no calibration pass, no headroom, and
appended tokens are quantized as precisely as the prompt. The query is split
the same way each step. Values are BF16, or two
E4M3~\citep{micikevicius2022fp8} planes under a power-of-two scale per layer.
The kernel widens the E4M3 values to BF16 exactly by bit placement. Plane A
and its scale cost 130 bytes per key at $D=128$.

\paragraph{Weight-gated reads.}
Each step, a front kernel quantizes the query, appends the new token, and
estimates every row's $Z_i$ (Section~\ref{sec:contract}). Each decode CTA is
one warpgroup and walks its split in tiles of 64 keys; a request's splits
interleave by tile, because live keys cluster in the most
recent tiles and contiguous splits would leave them all to one CTA. An
INT8 WGMMA scores every key against every row of the GQA~\citep{ainslie2023gqa} group, keys on $M$
and rows on $N$ (Figure~\ref{fig:decode-kernel}b). Against the fixed $Z_i$,
each score yields three final verdict bits: \emph{live} ($s\ge Z_i-T$),
\emph{refine K}, and \emph{refine V}, set when a weight is large enough for
the second plane to matter at the error target; the refine thresholds follow
each request's own length, so its gates do not depend on its batch. The bits
are ORed over the group, and only then are bytes requested: plane B for
refined keys, added through a second INT8 product, and value rows for live
keys. A tile with no live key skips its value work. Against a running
maximum these reads could be skipped only conservatively
(Section~\ref{sec:eval-ablation}). At $D=64$, groups of up to four rows walk
128-key tiles that hold two keys in each row of the product
(Appendix~\ref{app:tiles}).

\paragraph{Value product.}
Weights $p=2^{s-Z_i}$ are formed in FP32 and fed to a BF16 WGMMA,
$A^\top\leftarrow A^\top+V^\top P^\top$, with channels on $M$, rows on $N$,
and the tile's 64 keys on the reduction dimension, so no warp holds a partial
row. For dense decode and depths of 15 or more, the BF16 rounding residual
of $p$ rides a second product; $L$ sums the weights the products use.
Each split writes an FP32 pair $(A,L)$, and a combine adds the pairs in a
fixed slot order (Algorithm~\ref{alg:decode}). Speculative
drafts~\citep{leviathan2023fast} and verification
trees~\citep{miao2024specinfer} are extra query columns masked to their
ancestors. Because pairs add, a prefix shared by many requests is decoded once
with their query rows stacked, and its $(A,L)$ is added to each suffix's in
the combine (Appendix~\ref{app:tiles}).

\paragraph{Depth as a dial.}
With depth $T$, cut keys add their weight to $L$ but not their value to $A$.
$T$ is a parameter of each call, not of the cache, so one cache serves every
depth, and $T=\infty$ is dense decode. For BF16 values the kernel substitutes
a model of each 64-key block, its mean value plus a rank-16 map from key to
value deviation fitted on the prompt, and one virtual row per block carries
the cut keys' summed weight; for 8-bit values the model is the request's
running mean, which needs no fit. We evaluate two finite settings. At $T=16$,
the error is within 1.5\% of the lowest BF16 error on six of the seven
generations in Section~\ref{sec:evaluation}. At $T=14$, the kernel reads fewer
bytes while remaining within the BF16 error range on those generations.

At $D=128$ a BF16 key and value cost 512 bytes. The dense path reads
$130+128\,r_K+256$ bytes per key, where $r_K$ is the refined fraction; 8-bit
values replace 256 with $128+128\,r_V$. At finite depth, BF16 value traffic
is $256\,\lambda$ for live fraction $\lambda$. On the ablation cells of
Section~\ref{sec:eval-ablation} these measure 389--423, 261--280, and
196--334 bytes per key. Every configuration stores both planes, 514 bytes per
key; the gates save reads.

\section{Deterministic backward kernel}
\label{sec:backward}

Training pairs FlashAttention-4's forward with a FoldAttention backward that
keeps the structure of FlashAttention-3's Hopper kernel (a TMA producer warp,
two ping-ponged MMA warpgroups, one key block per CTA) and changes how CTAs'
partial gradients are added (Figure~\ref{fig:backward-reduction}).

\begin{figure}[H]
\centering
\includegraphics[width=\linewidth]{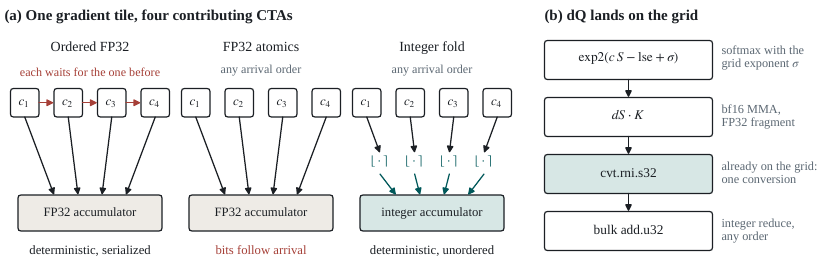}
\caption{\captiona\ Four CTAs' partials into one gradient tile: ordered FP32
adds wait for each other, atomics follow arrival order, and the integer fold
rounds each partial onto one grid ($\lfloor\cdot\rceil$) and adds in any
order. \captionb\ The grid exponent $\sigma$ rides the softmax exponential, so
$dQ$ partials leave the tensor core on the grid.}
\label{fig:backward-reduction}
\end{figure}

\paragraph{dQ.}
A preprocess computes $D_i=dO_i^\top o_i$ and the maxima that bound $dQ$
(Equation~\ref{eq:dq-bound}) per request and KV head. Each warpgroup then
recomputes $P\,2^{\sigma}=\operatorname{exp2}(c\,S-\ell+\sigma)$, placing the
grid exponent $\sigma$ inside the exponential that softmax already requires.
The product $dS\,K$ therefore leaves the tensor core on the grid. One
\texttt{cvt.rni.s32} rounds it, and a store warp adds the tile into a 32-bit
global accumulator with a bulk asynchronous reduction. The grid leaves 30
bits for the bound, so the sum cannot
overflow (Proposition~\ref{prop:grid} with $b=30$). A postprocess scales the
integers back (Algorithm~\ref{alg:backward}).

\paragraph{dK and dV.}
Keys are shared by the $G$ query heads of a GQA group, so $dK$ and $dV$ also
receive partials from several heads. A planner chooses ownership from the
request shape. When there is enough parallel work, one CTA walks a key block
across all $G$ heads and keeps $dK,dV$ in registers. Wider groups split into
subgroups; the last CTA to arrive adds their FP32 partials in subgroup order.
When each head has its own CTA, the CTAs instead round their partials onto
integer grids, as for $dQ$.

\paragraph{Scheduling and guarantees.}
Dense batches run on a persistent kernel that claims tiles from a work list
sorted by cost within sections of (request, head) pairs sized to a fixed
working-set budget, so heavy causal tiles start first while neighbouring CTAs
share their query-side reads in L2. Because no reduction waits on an order,
CTAs claim whatever tile is next, and reversing the list leaves every bit
unchanged. A request's gradients are the same bits alone or packed with
others, because the variable-length plan reads only head counts.

\section{Evaluation}
\label{sec:evaluation}

\paragraph{Setup.}
One H100 80GB HBM3, CUDA 13.0, PyTorch 2.13~\citep{paszke2019pytorch}.
Baselines are FlashAttention-3~\citep{shah2024flashattention3} and
-4~\citep{zadouri2026flashattention4}, FlashInfer~\citep{ye2025flashinfer}
with its tensor-core and TensorRT-LLM XQA decode~\citep{nvidia2026tensorrtllm},
cuDNN~\citep{nvidia2026cudnn}, and DASH~\citep{qiang2026dash}; for shared
prefixes and drafts we add vLLM's cascade path, PAT~\citep{yi2026pat},
FastTree~\citep{pan2025fasttree}, and SGLang's tree
verification~\citep{zheng2024sglang} (versions in
Appendix~\ref{app:protocol}). Each library runs every configuration it offers
at each shape and is represented by its fastest; every kernel is scored
against one FP32 reference. We replay CUDA graphs with L2 evicted before each
sample and the kernel order rotated each round, and report medians of paired
per-round ratios.

\subsection{Decode on model generations}
\label{sec:eval-decode}

We capture post-RoPE~\citep{su2024roformer} queries, keys, and values from prefills of
Qwen3-30B-A3B (layer 24, $D=128$, $G=8$), gpt-oss-20b (layers 9 and 21,
$D=64$, $G=8$), and GLM-4-9B (layers 8 and 28, $D=128$, $G=16$), decode
eight steps with the model's own next tokens, and time every kernel on the
final state (Table~\ref{tab:decode}; Figure~\ref{fig:pareto} plots latency
against error along the depth dial). Dense FoldAttention is faster than every
baseline in all seven cases: 1.24--1.30$\times$ at $G=8$ and
1.14--1.17$\times$ at $G=16$. Its error is within 1.2\% of the lowest BF16
error and below it in six cases. At depth 16, decode is 1.36--2.30$\times$
faster, with an error within 1.5\% of the lowest BF16 error in six cases and
17\% above it on gpt-oss-20b layer 21. Depth 14 is 1.54--2.65$\times$
faster at 1.07--1.46$\times$ that error, still below XQA's in every case. The front kernel adds 4--8\,$\mu$s per step.

\begin{table}[t]
\centering
\caption{Decode on eight-step generations: latency in $\mu$s / FP32-relative
$\ell_2$ error in $10^{-3}$ (speedup over the fastest BF16 baseline).
$\dagger$: error above the lower end of the BF16 baselines' range.}
\label{tab:decode}
\scriptsize
\setlength{\tabcolsep}{3pt}
\begin{tabular}{llcccc}
\toprule
Model, layer, batch $\times$ context & Fastest BF16 & BF16 err. & Fold dense & Fold $T{=}16$ & Fold $T{=}14$ \\
\midrule
Qwen3-30B L24, 8$\times$16K & FA-3 99 & 1.66--3.00 & 77 / 1.63 (1.30$\times$) & 54 / 1.68$^\dagger$ (1.84$\times$) & 45 / 2.33$^\dagger$ (2.22$\times$) \\
Qwen3-30B L24, 32$\times$8--16K & FlashInfer 264 & 1.71--2.33 & 206 / 1.67 (1.28$\times$) & 126 / 1.70 (2.10$\times$) & 103 / 2.19$^\dagger$ (2.54$\times$) \\
Qwen3-30B L24, 16$\times$16--32K & FlashInfer 262 & 1.71--3.13 & 207 / 1.65 (1.27$\times$) & 114 / 1.73$^\dagger$ (2.30$\times$) & 99 / 2.39$^\dagger$ (2.65$\times$) \\
gpt-oss-20b L9, 32$\times$8--16K & cuDNN 269 & 1.68--2.83 & 212 / 1.67 (1.27$\times$) & 163 / 1.70$^\dagger$ (1.65$\times$) & 134 / 2.06$^\dagger$ (2.00$\times$) \\
gpt-oss-20b L21, 32$\times$8--16K & cuDNN 267 & 1.68--2.72 & 216 / 1.70$^\dagger$ (1.24$\times$) & 171 / 1.97$^\dagger$ (1.57$\times$) & 142 / 2.46$^\dagger$ (1.89$\times$) \\
GLM-4-9B L8, 32$\times$8--16K & FlashInfer 139 & 1.70--3.05 & 119 / 1.66 (1.17$\times$) & 91 / 1.67 (1.53$\times$) & 83 / 1.82$^\dagger$ (1.69$\times$) \\
GLM-4-9B L28, 32$\times$8--16K & FlashInfer 140 & 1.69--2.95 & 123 / 1.66 (1.14$\times$) & 103 / 1.69$^\dagger$ (1.36$\times$) & 91 / 2.09$^\dagger$ (1.54$\times$) \\
\bottomrule
\end{tabular}
\end{table}

\subsection{Ablation}
\label{sec:eval-ablation}

Table~\ref{tab:ablation} adds one mechanism at a time. Reading both key
planes costs the same bytes as a BF16 cache, and in this configuration
FoldAttention, despite its extra refinement product, runs at
0.98--1.01$\times$ the fastest vendor-tuned BF16 kernel. Its speedup comes
from the reads that final weights eliminate. Gating the second plane refines
2--29\% of keys at $D=128$, reduces traffic by 18--24\%, and yields
1.18--1.27$\times$ speedup.

\paragraph{Why the reference is declared.}
Online softmax could gate the same reads against its running maximum, which
bounds each final weight from above. Emulated on the seven generations with
the kernel's logits, gates, and splits (Appendix~\ref{app:methods}), that
policy reads 481--510 bytes per key at $D=128$, against 390--426 under the
declared reference, because each split restarts its maximum. Even with one
split per request, it reads 3--16\% more in dense decode and
1.2--1.7$\times$ as many bytes at depth 14. At the BF16 kernels' error, no
other method we emulate (Quest~\citep{tang2024quest}, Faster Flash Decoding,
KIVI, INT8 and FP8 caches) reads fewer bytes than a BF16 kernel; depth 14
reads 117--279.

\begin{table}[t]
\centering
\caption{Decode kernel speed over the fastest BF16 baseline's decode and bytes
read per key, adding one mechanism at a time (ragged batches of 256 KV heads,
16K context). Errors
are FP32-relative $\ell_2$ in $10^{-3}$; the most accurate BF16 baseline's is 1.67--1.69
in these cells, and XQA's 2.31--2.83.}
\label{tab:ablation}
\scriptsize
\setlength{\tabcolsep}{4pt}
\begin{tabular}{lccccc}
\toprule
& Qwen & Qwen & GLM & gpt-oss & \\
& $D{=}128$, $G{=}8$ & $D{=}128$, $G{=}4$ & $D{=}128$, $G{=}16$ & $D{=}64$, $G{=}8$ & Error \\
\midrule
Both planes & 0.98$\times$ (514\,B) & 0.99$\times$ (514\,B) & 0.98$\times$ (514\,B) & 1.00$\times$ (258\,B) & 1.64--1.66 \\
Gated plane B (dense) & 1.26$\times$ (390\,B) & 1.27$\times$ (389\,B) & 1.18$\times$ (423\,B) & 1.24$\times$ (204\,B) & 1.66--1.67 \\
+ 8-bit values & 1.76$\times$ (262\,B) & 1.79$\times$ (261\,B) & 1.41$\times$ (275\,B) & 1.56$\times$ (134\,B) & 1.68--1.70 \\
+ depth $T{=}16$, BF16 values & 2.15$\times$ (217\,B) & 2.37$\times$ (196\,B) & 1.53$\times$ (308\,B) & 1.65$\times$ (141\,B) & 1.70--1.72 \\
\bottomrule
\end{tabular}
\end{table}

\subsection{Context and group-size sweep}
Figure~\ref{fig:sweep} sweeps context from 1K to 32K keys for MHA and GQA
groups of eight at $D=128$ and $D=64$, using ragged batches of 256 KV heads.
At $D=128$, dense FoldAttention is 1.19--1.25$\times$ faster at 1K and
1.27--1.29$\times$ at 32K, where it reads at 95--97\% of the measured
bandwidth ceiling. At $D=64$, it is 1.04--1.10$\times$ faster at 1K and
1.30--1.35$\times$ at 32K. MHA gains most, because no verdict is ORed across
a group. At 32K, depth 16 reaches 3.02$\times$ at $D=128$ and 2.83$\times$
at $D=64$. Across all 84 cells of both head
dimensions, groups of one to sixteen, and ragged and uniform batches
(Appendix~\ref{app:decode}), dense decode reaches 1.35$\times$ and is slower
only once (0.95$\times$ for uniform $D=64$, $G=8$ at 1K). Depth 16 reaches
3.09$\times$ while remaining within the BF16 error range in 82 cells. Depth
14 reaches 3.50$\times$ and remains within that range in 67. Dense decode
with 8-bit values reaches 1.89$\times$.

\begin{figure}[t]
\centering
\includegraphics[width=\linewidth]{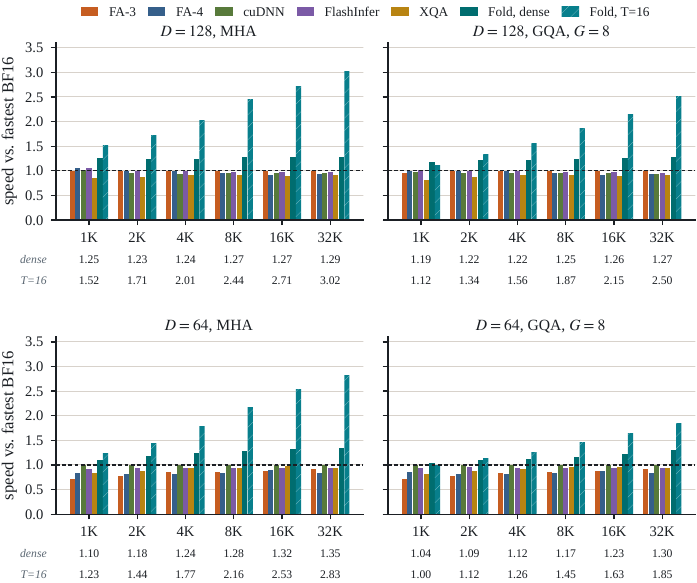}
\caption{Decode speed relative to the fastest BF16 baseline in each cell
(dashed line at 1), on ragged batches of 256 KV heads at $D=128$ and $D=64$,
for MHA and GQA with $G=8$. The table under each panel gives FoldAttention's
speedups, dense and at depth 16.}
\label{fig:sweep}
\end{figure}

\subsection{Serving and quality}
\label{sec:eval-e2e}

We decode Qwen3-8B~\citep{yang2025qwen3} ($G=4$, $D=128$) with the whole
model step captured in one CUDA graph, changing only the attention call and
its cache, at 32 requests of 8K, 16 of 16K, and 8 of 32K. Dense FoldAttention
makes the whole step 1.13--1.14$\times$ faster. Depth 16 gives
1.35--1.46$\times$, and depth 14 gives 1.44--1.54$\times$. With BF16 values,
a finite depth needs the block model, fitted on the host in 6.6--11.3\,ms per
layer per prefill, which the depth-16 step recovers within 38--65 generated
tokens; with 8-bit values it costs nothing (Appendix~\ref{app:decode}).

Each arm decodes into its own cache and is teacher-forced after 8K--32K
prompts (Table~\ref{tab:quality}). FoldAttention's KL divergence from BF16
FlashAttention-3 is 8.0--8.5$\times10^{-4}$ nats per token, compared with
7.9--8.9$\times10^{-4}$ for FlashInfer and cuDNN, and its likelihood differs
by at most $5.5\times10^{-4}$ nats. The FP8 cache has 9--16$\times$ that
divergence and a likelihood up to $7.5\times10^{-3}$ nats worse. At 64K and
128K, FoldAttention remains at FlashInfer's divergence level
(8.9--10.5 versus 9.1--9.8$\times10^{-4}$), while FP8 reaches
$175\times10^{-4}$. Its mean RULER accuracy over four tasks is 67.6--68.0\%,
against 68.0\% for FlashAttention-3, 66.8\% for FlashInfer, and 66.0\% for FP8. On the 16 English tasks of LongBench~\citep{bai2024longbench}, its mean
score is 49.2--49.3 against 49.3 for FlashAttention-3.
Greedy generations of 1024 tokens first leave FlashAttention-3's after a
median of 49--90 tokens with FoldAttention, 55--61 with the other BF16
kernels, and 26 with FP8.

\begin{table}[t]
\centering
\caption{Qwen3-8B decoding for real in each arm. KL is FlashAttention-3's
next-token distribution against the arm's, in $10^{-4}$ nats per token, and
$\Delta$NLL the arm's teacher-forced negative log-likelihood minus
FlashAttention-3's, in $10^{-4}$ nats, on 32 WikiText-103 documents of 1024
tokens. RULER is the mean exact match over four tasks. Contexts past 32K use
YaRN~\citep{peng2024yarn}; their cuDNN and 8-bit $V$ cells come from a
second run, in which FlashAttention-3 scores 70.3 on RULER. ``Diverge'' is
the median position at which greedy generation first leaves
FlashAttention-3's.}
\label{tab:quality}
\scriptsize
\setlength{\tabcolsep}{3pt}
\begin{tabular}{lccccccccc}
\toprule
& \multicolumn{3}{c}{KL, 8K / 16K / 32K} & $\Delta$NLL & RULER & KL & RULER & LongBench & Diverge \\
Arm & & & & 32K & 8--32K & 64K / 128K & 64--128K & 16 tasks & \\
\midrule
FA-3 & 0 & 0 & 0 & 0 & 87.1 & 0 / 0 & 68.0 & 49.3 & -- \\
FlashInfer & 7.9 & 8.0 & 8.6 & $-2.9$ & 86.9 & 9.1 / 9.8 & 66.8 & 49.3 & 55 \\
cuDNN & 8.0 & 8.1 & 8.9 & $-1.1$ & 87.1 & 8.8 / 10.2 & 69.9 & -- & 61 \\
FlashInfer FP8 & 73.9 & 109.0 & 131.8 & $+75.1$ & 87.3 & 124.8 / 174.6 & 66.0 & 49.2 & 26 \\
Fold dense & 8.1 & 8.1 & 8.5 & $+0.2$ & 87.6 & 8.9 / 10.5 & 67.6 & 49.3 & 90 \\
Fold dense, 8-bit $V$ & 8.0 & 8.0 & 8.1 & $+0.1$ & 87.1 & 9.0 / 9.6 & 69.5 & -- & 78 \\
Fold $T{=}16$ & 8.1 & 8.1 & 8.2 & $+0.0$ & 86.6 & 8.9 / 10.3 & 68.0 & 49.3 & 49 \\
Fold $T{=}14$ & 8.1 & 8.1 & 8.5 & $+4.3$ & 86.9 & 9.1 / 9.8 & 67.6 & 49.2 & 67 \\
\bottomrule
\end{tabular}
\end{table}

\subsection{Cascades and drafts}
\label{sec:eval-compose}

On 16 shared-prefix cascades of Qwen3-30B-A3B and gpt-oss-20b, dense
FoldAttention is faster than every kernel we ran in every cell,
1.04--1.42$\times$ over the fastest, and 1.34$\times$, 1.32$\times$, and
1.61$\times$ faster by geometric mean than vLLM's cascade path, PAT, and
FastTree; on prefix trees it matches PAT (1.07$\times$). Verifying chains of
two or four drafts as extra query columns, it is 0.99--1.24$\times$ as fast
as the fastest library; longer chains and trees favor FA-3 and SGLang
(Appendix~\ref{app:compose}).

\subsection{Backward}
\label{sec:eval-backward}

On eight GQA and MHA model shapes at 16K tokens or more
(Table~\ref{tab:backward}), FoldAttention is 1.29$\times$ and 1.23$\times$
faster than deterministic FlashAttention-3 and -4, 1.32$\times$ faster than
DASH, and 1.05$\times$ faster than the fastest nondeterministic kernel;
packed variable-length batches behave the same. On the grid FlashAttention-3
and DASH report, it is 1.12$\times$ faster than DASH with MHA, and with GQA
groups of eight up to 1.84$\times$ faster than deterministic FlashAttention
and 1.87$\times$ faster than DASH (Appendix~\ref{app:backward}). In the same
kernel, FP32 atomics for $dQ$ are 1.0\% faster over 32 shapes, while replacing
the persistent work list with one tile per CTA is 3.9\% slower. The order-free
sum is what lets a deterministic kernel claim tiles in any order; the
deterministic modes of FlashAttention-3 and -4 take 24\% and 19\% longer
than their nondeterministic modes.

\begin{table}[t]
\centering
\caption{Backward speedup of FoldAttention over each baseline (geometric
mean). ``Fastest'' takes the best kernel of its kind at each shape. Training
time combines FA-4's forward with each backward.}
\label{tab:backward}
\scriptsize
\setlength{\tabcolsep}{3.5pt}
\begin{tabular}{lcccccccc}
\toprule
Set & FA-3 & FA-3 det. & FA-4 & FA-4 det. & cuDNN & DASH & Fastest det. & Fastest nondet. \\
\midrule
Model shapes, backward (8) & 1.07 & 1.29 & 1.06 & 1.23 & 1.23 & 1.32 & 1.23 & 1.05 \\
Packed varlen, backward (5) & 1.08 & 1.30 & 1.07 & 1.29 & -- & -- & 1.28 & 1.06 \\
MHA grid, backward (24) & 0.99 & 1.18 & 0.98 & 1.11 & 1.17 & 1.12 & 1.07 & 0.97 \\
GQA grid ($G{=}8$), backward (24) & 1.06 & 1.32 & 1.04 & 1.25 & 1.23 & 1.28 & 1.23 & 1.04 \\
Model shapes, train (8) & 0.99 & 1.15 & 1.01 & 1.14 & 1.26 & 1.17 & 1.12 & 0.99 \\
\bottomrule
\end{tabular}
\end{table}

\paragraph{Determinism and training.}
FoldAttention's gradients are bit-identical when a request is rerun, batched,
or packed, as are those of the deterministic modes of FlashAttention-3 and
-4 and of DASH, while the nondeterministic kernels repeat their bits on at
most 8\% of shapes (Appendix~\ref{app:determinism}). On operands captured from
training, FoldAttention's $dQ$, $dK$, and $dV$ match FlashAttention-3's
$\ell_2$ error to three digits; their per-element relative errors are also
comparable through the 99th percentile (Table~\ref{tab:grad-elements}). With
FlashAttention-4's forward,
a forward and backward step is 1.12$\times$ faster than the fastest
deterministic alternative, and a whole-model training step of a 1B
Llama~\citep{grattafiori2024llama3} is within 0.5\% of the nondeterministic
kernels. Three FoldAttention runs trained from scratch for 2000 steps of 16K
tokens produce bit-identical weights. Their validation loss is 3.946, within
the 3.939--3.951 range of the other kernels
(Figure~\ref{fig:train-curve}).

\section{Related work}
\label{sec:related}

\paragraph{Attention kernels and state merges.}
Hydragen, vLLM's cascade path, PAT, and FastTree read a shared prefix once and
merge its online-softmax state into each
request's~\citep{juravsky2024hydragen,kwon2023pagedattention,yi2026pat,pan2025fasttree},
as SGLang does for draft trees~\citep{zheng2024sglang}; FoldAttention adds
the levels' pairs instead.

\paragraph{Fixed softmax references.}
FlashDecoding++ applies one profiled constant, with a synchronized fallback
when scores leave its FP16 range~\citep{hong2024flashdecodingpp};
FlashAttention-4 skips small rescales~\citep{zadouri2026flashattention4}; VFA
freezes a maximum from block summaries~\citep{sun2026vfa}; and Faster Flash
Decoding and BLASST skip blocks below a pseudo- or running maximum,
discarding their mass~\citep{liu2026ffd,yuan2025blasst}. FoldAttention needs
only a reference in the exponent range, and uses final weights for per-key
reads, additive merges, and a dense denominator at every depth.

\paragraph{Quantized caches and sparse decode.}
KIVI, KVQuant, and FP8 caches compress the KV
cache~\citep{liu2024kivi,hooper2024kvquant,micikevicius2022fp8}, and Quest,
SparQ, and CoSA select pages or skip value reads by approximate
scores~\citep{tang2024quest,ribar2023sparq,xue2026cosa}. These methods
primarily evaluate task accuracy rather than agreement with a BF16 kernel.
Among the methods we emulate, none reads fewer bytes than a BF16 kernel at
its output error (Section~\ref{sec:eval-ablation}).

\paragraph{Reproducible reduction.}
Pre-rounding summands to a common grid makes floating-point sums
reproducible~\citep{demmel2013fast,ahrens2020reprotoms}. DASH reschedules
FlashAttention's ordered backward as a DAG~\citep{qiang2026dash}, and
batch-invariant kernels fix reduction topologies and split sizes for serving
and for reinforcement learning, whose rollouts must match the
trainer~\citep{zhang2025tbik,he2025nondeterminism,sglang2025deterministic,zhong2026vexact}.
FoldAttention pre-rounds inside the attention backward with bounds it
computes cheaply, so determinism needs no ordering.

\section{Limitations and conclusion}
\label{sec:limitations}

The kernels target Hopper, head dimensions 64 and 128 for decode (96 also for
backward), and MHA and GQA caches; latent caches such as MLA's are not yet
supported, and porting the WGMMA layouts to Blackwell is future work. Online
softmax discovers a row's reference; FoldAttention declares it. Final weights
let decode read only the bytes they need, from BF16-kernel accuracy to three
times faster on one cache, and a declared grid lets the backward sum integers
in any order, deterministic and faster than the nondeterministic kernels.

\bibliography{references}

@inproceedings{dao2022flashattention,
  title={{FlashAttention}: Fast and Memory-Efficient Exact Attention with {IO}-Awareness},
  author={Dao, Tri and Fu, Daniel Y. and Ermon, Stefano and Rudra, Atri and R{\'e}, Christopher},
  booktitle={Advances in Neural Information Processing Systems},
  year={2022}
}

@article{dao2023flashattention2,
  title={{FlashAttention-2}: Faster Attention with Better Parallelism and Work Partitioning},
  author={Dao, Tri},
  journal={arXiv preprint arXiv:2307.08691},
  year={2023}
}

@article{shah2024flashattention3,
  title={{FlashAttention-3}: Fast and Accurate Attention with Asynchrony and Low-Precision},
  author={Shah, Jay and Bikshandi, Ganesh and Zhang, Ying and Thakkar, Vijay and Ramani, Pradeep and Dao, Tri},
  journal={arXiv preprint arXiv:2407.08608},
  year={2024}
}

@article{zadouri2026flashattention4,
  title={{FlashAttention-4}: Algorithm and Kernel Pipelining Co-Design for Asymmetric Hardware Scaling},
  author={Zadouri, Ted and Hoehnerbach, Markus and Shah, Jay and Liu, Timmy and Thakkar, Vijay and Dao, Tri},
  journal={arXiv preprint arXiv:2603.05451},
  year={2026}
}

@article{sanovar2024leanattention,
  title={Lean Attention: Hardware-Aware Scalable Attention Mechanism for the Decode-Phase of Transformers},
  author={Sanovar, Rya and Bharadwaj, Srikant and St. Amant, Renee and R{\"u}hle, Victor and Rajmohan, Saravan},
  journal={arXiv preprint arXiv:2405.10480},
  year={2024}
}

@article{ye2025flashinfer,
  title={{FlashInfer}: Efficient and Customizable Attention Engine for {LLM} Inference Serving},
  author={Ye, Zihao and Chen, Lequn and Lai, Ruihang and Lin, Wuwei and Zhang, Yineng and Wang, Stephanie and Chen, Tianqi and Kasikci, Baris and Grover, Vinod and Krishnamurthy, Arvind and Ceze, Luis},
  journal={arXiv preprint arXiv:2501.01005},
  year={2025}
}

@article{qiang2026dash,
  title={{DASH}: Deterministic Attention Scheduling for High-throughput Reproducible {LLM} Training},
  author={Qiang, Xinwei and Chen, Hongmin and Sun, Shixuan and Leng, Jingwen and Liu, Xin and Guo, Minyi},
  journal={arXiv preprint arXiv:2601.21824},
  year={2026}
}

@article{zhang2025tbik,
  title={Deterministic Inference across Tensor Parallel Sizes That Eliminates Training-Inference Mismatch},
  author={Zhang, Ziyang and Ding, Xinheng and Yuan, Jiayi and Liu, Rixin and Mao, Huizi and Xing, Jiarong and Liu, Zirui},
  journal={arXiv preprint arXiv:2511.17826},
  year={2025}
}

@article{zhong2026vexact,
  title={Diagnosing Training Inference Mismatch in {LLM} Reinforcement Learning},
  author={Zhong, Tianle and Ling, Neiwen and Pi, Yifan and Wei, Zijun and Yu, Tianshu and Fox, Geoffrey and Wu, Peng and Yu, Xiao},
  journal={arXiv preprint arXiv:2605.14220},
  year={2026}
}

@misc{nvidia2026cudnn,
  title={{NVIDIA cuDNN}: Scaled Dot Product Attention},
  author={{NVIDIA Corporation}},
  year={2026},
  howpublished={\url{https://docs.nvidia.com/deeplearning/cudnn/latest/operations/Attention.html}},
  note={Accessed September 19, 2026}
}

@inproceedings{hong2024flashdecodingpp,
  title={{FlashDecoding++}: Faster Large Language Model Inference on {GPUs}},
  author={Hong, Ke and Dai, Guohao and Xu, Jiaming and Mao, Qiuli and Li, Xiuhong and Liu, Jun and Chen, Kangdi and Dong, Yuhan and Wang, Yu},
  booktitle={Proceedings of Machine Learning and Systems},
  year={2024}
}

@inproceedings{kwon2023pagedattention,
  title={Efficient Memory Management for Large Language Model Serving with {PagedAttention}},
  author={Kwon, Woosuk and Li, Zhuohan and Zhuang, Siyuan and Sheng, Ying and Zheng, Lianmin and Yu, Cody Hao and Gonzalez, Joseph E. and Zhang, Hao and Stoica, Ion},
  booktitle={Proceedings of the 29th Symposium on Operating Systems Principles},
  year={2023}
}

@article{juravsky2024hydragen,
  title={{Hydragen}: High-Throughput {LLM} Inference with Shared Prefixes},
  author={Juravsky, Jordan and Brown, Bradley and Ehrlich, Ryan and Fu, Daniel Y. and R{\'e}, Christopher and Mirhoseini, Azalia},
  journal={arXiv preprint arXiv:2402.05099},
  year={2024}
}

@article{ribar2023sparq,
  title={{SparQ Attention}: Bandwidth-Efficient {LLM} Inference},
  author={Ribar, Luka and Chelombiev, Ivan and Hudlass-Galley, Luke and Blake, Charlie and Luschi, Carlo and Orr, Douglas},
  journal={arXiv preprint arXiv:2312.04985},
  year={2023}
}

@inproceedings{liu2024kivi,
  title={{KIVI}: A Tuning-Free Asymmetric 2bit Quantization for {KV} Cache},
  author={Liu, Zirui and Yuan, Jiayi and Jin, Hongye and Zhong, Shaochen and Xu, Zhaozhuo and Braverman, Vladimir and Chen, Beidi and Hu, Xia},
  booktitle={International Conference on Machine Learning},
  year={2024}
}

@inproceedings{hooper2024kvquant,
  title={{KVQuant}: Towards 10 Million Context Length {LLM} Inference with {KV} Cache Quantization},
  author={Hooper, Coleman and Kim, Sehoon and Mohammadzadeh, Hiva and Mahoney, Michael W. and Shao, Yakun Sophia and Keutzer, Kurt and Gholami, Amir},
  booktitle={Advances in Neural Information Processing Systems},
  year={2024}
}

@article{liu2026ffd,
  title={Faster Than Flash: Exploiting Attention Sparsity for Efficient Long-Context Decoding},
  author={Liu, Zhigeng and Ning, Zhiyuan and Li, Ruixiao and Liu, Xiaoran and Song, Yuerong and Zhang, Min and He, Ziwei and Qiu, Xipeng},
  journal={arXiv preprint arXiv:2609.00097},
  year={2026}
}

@article{xue2026cosa,
  title={{CoSA}: Accelerating Long-Context Inference via Proxy-Kernel Co-Designed Sparse Attention},
  author={Xue, Yufei and Niu, Lin and Liu, Hong and Liu, Siran and Shao, Hanyong and Liu, Wei and Yu, Guanghua and Zhu, Jianchen and Zhang, Jun},
  journal={arXiv preprint arXiv:2607.25291},
  year={2026}
}

@article{sun2026vfa,
  title={VFA: Relieving Vector Operations in Flash Attention with Global Maximum Pre-computation},
  author={Sun, Yupeng and Li, Yanzhao and Zou, Zhiqiang and Du, Bai and Zhang, Zhiyuan and Dong, Hui and Fan, Gaoyige and Wang, Hui},
  journal={arXiv preprint arXiv:2604.12798},
  year={2026}
}

@article{milakov2018online,
  title={Online normalizer calculation for softmax},
  author={Milakov, Maxim and Gimelshein, Natalia},
  journal={arXiv preprint arXiv:1805.02867},
  year={2018}
}

@inproceedings{demmel2013fast,
  title={Fast Reproducible Floating-Point Summation},
  author={Demmel, James and Nguyen, Hong Diep},
  booktitle={IEEE Symposium on Computer Arithmetic (ARITH)},
  year={2013}
}

@article{ahrens2020reprotoms,
  title={Algorithms for Efficient Reproducible Floating Point Summation},
  author={Ahrens, Peter and Demmel, James and Nguyen, Hong Diep},
  journal={ACM Transactions on Mathematical Software},
  volume={46},
  number={3},
  year={2020}
}

@inproceedings{ashkboos2024quarot,
  title={{QuaRot}: Outlier-Free 4-Bit Inference in Rotated {LLMs}},
  author={Ashkboos, Saleh and Mohtashami, Amirkeivan and Croci, Maximilian L. and Li, Bo and Cameron, Pashmina and Jaggi, Martin and Alistarh, Dan and Hoefler, Torsten and Hensman, James},
  booktitle={Advances in Neural Information Processing Systems},
  year={2024}
}

@misc{he2025nondeterminism,
  title={Defeating Nondeterminism in {LLM} Inference},
  author={He, Horace and {Thinking Machines Lab}},
  howpublished={Thinking Machines Lab: Connectionism},
  year={2025},
  note={\url{https://thinkingmachines.ai/blog/defeating-nondeterminism-in-llm-inference/}}
}

@inproceedings{vaswani2017attention,
  title={Attention Is All You Need},
  author={Vaswani, Ashish and Shazeer, Noam and Parmar, Niki and Uszkoreit, Jakob and Jones, Llion and Gomez, Aidan N. and Kaiser, {\L}ukasz and Polosukhin, Illia},
  booktitle={Advances in Neural Information Processing Systems},
  year={2017}
}

@inproceedings{ainslie2023gqa,
  title={{GQA}: Training Generalized Multi-Query Transformer Models from Multi-Head Checkpoints},
  author={Ainslie, Joshua and Lee-Thorp, James and de Jong, Michiel and Zemlyanskiy, Yury and Lebr{\'o}n, Federico and Sanghai, Sumit},
  booktitle={Proceedings of the 2023 Conference on Empirical Methods in Natural Language Processing},
  year={2023}
}

@article{su2024roformer,
  title={{RoFormer}: Enhanced Transformer with Rotary Position Embedding},
  author={Su, Jianlin and Ahmed, Murtadha and Lu, Yu and Pan, Shengfeng and Bo, Wen and Liu, Yunfeng},
  journal={Neurocomputing},
  volume={568},
  year={2024}
}

@inproceedings{xiao2024streamingllm,
  title={Efficient Streaming Language Models with Attention Sinks},
  author={Xiao, Guangxuan and Tian, Yuandong and Chen, Beidi and Han, Song and Lewis, Mike},
  booktitle={International Conference on Learning Representations},
  year={2024}
}

@article{micikevicius2022fp8,
  title={{FP8} Formats for Deep Learning},
  author={Micikevicius, Paulius and Stosic, Dusan and Burgess, Neil and Cornea, Marius and Dubey, Pradeep and Grisenthwaite, Richard and Ha, Sangwon and Heinecke, Alexander and Judd, Patrick and Kamalu, John and Mellempudi, Naveen and Oberman, Stuart and Shoeybi, Mohammad and Siu, Michael and Wu, Hao},
  journal={arXiv preprint arXiv:2209.05433},
  year={2022}
}

@inproceedings{leviathan2023fast,
  title={Fast Inference from Transformers via Speculative Decoding},
  author={Leviathan, Yaniv and Kalman, Matan and Matias, Yossi},
  booktitle={International Conference on Machine Learning},
  year={2023}
}

@inproceedings{miao2024specinfer,
  title={{SpecInfer}: Accelerating Large Language Model Serving with Tree-based Speculative Inference and Verification},
  author={Miao, Xupeng and Oliaro, Gabriele and Zhang, Zhihao and Cheng, Xinhao and Wang, Zeyu and Zhang, Zhengxin and Wong, Rae Ying Yee and Zhu, Alan and Yang, Lijie and Shi, Xiaoxiang and Shi, Chunan and Chen, Zhuoming and Arfeen, Daiyaan and Abhyankar, Reyna and Jia, Zhihao},
  booktitle={Proceedings of the 29th ACM International Conference on Architectural Support for Programming Languages and Operating Systems},
  year={2024}
}

@misc{dao2023flashdecoding,
  title={Flash-Decoding for Long-Context Inference},
  author={Dao, Tri and Haziza, Daniel and Massa, Francisco and Sizov, Grigory},
  howpublished={\url{https://crfm.stanford.edu/2023/10/12/flashdecoding.html}},
  year={2023}
}

@inproceedings{paszke2019pytorch,
  title={{PyTorch}: An Imperative Style, High-Performance Deep Learning Library},
  author={Paszke, Adam and Gross, Sam and Massa, Francisco and Lerer, Adam and Bradbury, James and Chanan, Gregory and Killeen, Trevor and Lin, Zeming and Gimelshein, Natalia and Antiga, Luca and Desmaison, Alban and K{\"o}pf, Andreas and Yang, Edward and DeVito, Zachary and Raison, Martin and Tejani, Alykhan and Chilamkurthy, Sasank and Steiner, Benoit and Fang, Lu and Bai, Junjie and Chintala, Soumith},
  booktitle={Advances in Neural Information Processing Systems},
  year={2019}
}

@misc{nvidia2026tensorrtllm,
  title={{TensorRT-LLM}},
  author={{NVIDIA Corporation}},
  howpublished={\url{https://github.com/NVIDIA/TensorRT-LLM}},
  year={2026}
}

@article{yang2025qwen3,
  title={{Qwen3} Technical Report},
  author={Yang, An and others},
  journal={arXiv preprint arXiv:2505.09388},
  year={2025}
}

@article{openai2025gptoss,
  title={gpt-oss-120b \& gpt-oss-20b Model Card},
  author={{OpenAI}},
  journal={arXiv preprint arXiv:2508.10925},
  year={2025}
}

@article{glm2024chatglm,
  title={{ChatGLM}: A Family of Large Language Models from {GLM-130B} to {GLM-4} All Tools},
  author={{Team GLM}},
  journal={arXiv preprint arXiv:2406.12793},
  year={2024}
}

@article{grattafiori2024llama3,
  title={The {Llama} 3 Herd of Models},
  author={Grattafiori, Aaron and others},
  journal={arXiv preprint arXiv:2407.21783},
  year={2024}
}

@article{yuan2025blasst,
  title={{BLASST}: Dynamic {BLocked} Attention Sparsity via Softmax Thresholding},
  author={Yuan, Jiayi and Shinn, Cameron and Xu, Kai and Cui, Jingze and Klimiashvili, George and Xiao, Guangxuan and Zheng, Perkz and Li, Bo and Zhou, Yuxin and Ye, Zhouhai and You, Weijie and Zheng, Tian and Brown, Dominic and Wang, Pengbo and Hoehnerbach, Markus and Cai, Richard and Demouth, Julien and Owens, John D. and Hu, Xia and Han, Song and Liu, Timmy and Mao, Huizi},
  journal={arXiv preprint arXiv:2512.12087},
  year={2025}
}

@book{higham2002accuracy,
  title={Accuracy and Stability of Numerical Algorithms},
  author={Higham, Nicholas J.},
  edition={2},
  publisher={SIAM},
  year={2002}
}

@inproceedings{yi2026pat,
  title={{PAT}: Accelerating {LLM} Decoding via Prefix-Aware Attention with Resource Efficient Multi-Tile Kernel},
  author={Yi, Jinjun and Zhao, Zhixin and Hu, Yitao and Yan, Ke and Sun, Weiwei and Wang, Hao and Zhao, Laiping and Zhang, Yuhao and Li, Wenxin and Li, Keqiu},
  booktitle={Proceedings of the 31st ACM International Conference on Architectural Support for Programming Languages and Operating Systems (ASPLOS)},
  year={2026}
}

@inproceedings{pan2025fasttree,
  title={{FastTree}: Optimizing Attention Kernel and Runtime for Tree-Structured {LLM} Inference},
  author={Pan, Zaifeng and Ding, Yitong and Guan, Yue and Wang, Zheng and Yu, Zhongkai and Tang, Xulong and Wang, Yida and Ding, Yufei},
  booktitle={Proceedings of Machine Learning and Systems (MLSys)},
  year={2025}
}

@inproceedings{zheng2024sglang,
  title={{SGLang}: Efficient Execution of Structured Language Model Programs},
  author={Zheng, Lianmin and Yin, Liangsheng and Xie, Zhiqiang and Sun, Chuyue and Huang, Jeff and Yu, Cody Hao and Cao, Shiyi and Kozyrakis, Christos and Stoica, Ion and Gonzalez, Joseph E. and Barrett, Clark and Sheng, Ying},
  booktitle={Advances in Neural Information Processing Systems (NeurIPS)},
  year={2024}
}

@inproceedings{tang2024quest,
  title={{Quest}: Query-Aware Sparsity for Efficient Long-Context {LLM} Inference},
  author={Tang, Jiaming and Zhao, Yilong and Zhu, Kan and Xiao, Guangxuan and Kasikci, Baris and Han, Song},
  booktitle={Proceedings of the 41st International Conference on Machine Learning},
  year={2024}
}

@misc{sglang2025deterministic,
  title={Towards Deterministic Inference in {SGLang} and Reproducible {RL} Training},
  author={{The SGLang Team}},
  howpublished={LMSYS Org Blog},
  year={2025},
  note={\url{https://lmsys.org/blog/2025-09-22-sglang-deterministic/}}
}

@inproceedings{bai2024longbench,
  title={{LongBench}: A Bilingual, Multitask Benchmark for Long Context Understanding},
  author={Bai, Yushi and Lv, Xin and Zhang, Jiajie and Lyu, Hongchang and Tang, Jiankai and Huang, Zhidian and Du, Zhengxiao and Liu, Xiao and Zeng, Aohan and Hou, Lei and Dong, Yuxiao and Tang, Jie and Li, Juanzi},
  booktitle={Proceedings of the 62nd Annual Meeting of the Association for Computational Linguistics},
  year={2024}
}

@inproceedings{peng2024yarn,
  title={{YaRN}: Efficient Context Window Extension of Large Language Models},
  author={Peng, Bowen and Quesnelle, Jeffrey and Fan, Honglu and Shippole, Enrico},
  booktitle={International Conference on Learning Representations},
  year={2024}
}
\bibliographystyle{plainnat}

\clearpage
\appendix
\raggedbottom
\section{Proofs}
\label{app:proofs}

\begin{proof}[Proof of Proposition~\ref{prop:fold}]
(i) Both components of $F_i$ are sums over keys of terms that depend only on
$s_{ij}$, $v_j$, and $Z_i$, so a sum over $I\cup J$ splits into the sums over
$I$ and $J$ when they are disjoint.
(ii) Multiplying numerator and denominator by $2^{Z_i}>0$,
\[
\frac{A_i}{L_i}=\frac{\sum_j2^{s_{ij}-Z_i}v_j}{\sum_j2^{s_{ij}-Z_i}}
=\frac{\sum_j2^{s_{ij}}v_j}{\sum_j2^{s_{ij}}}
=\sum_j\operatorname{softmax}_j\bigl(q_i^\top k_j/\sqrt D\bigr)\,v_j ,
\]
since $2^{s_{ij}}=e^{q_i^\top k_j/\sqrt D}$. The quotient does not depend on
$Z_i$.
(iii) Every weight satisfies $0\le2^{s_{ij}-Z_i}\le2^{m_i-Z_i}$, so the exact
sums obey $L_i\le n\,2^{m_i-Z_i}<2^{127}/\nu$ and
$|A_{id}|\le\nu L_i<2^{127}$. Recursive FP32 summation of nonnegative terms
with unit roundoff $u=2^{-24}$ returns at most $(1+u)^{n}$ times the exact
sum~\citep{higham2002accuracy}, and $(1+u)^{2^{22}}<e^{1/4}<2$; the same
factor bounds $\sum_j2^{s_{ij}-Z_i}|v_{jd}|$, which dominates the computed
$|A_{id}|$. Every intermediate therefore stays below $2^{128}$, the FP32 and
BF16 overflow threshold. With flush to zero, a weight is lost only if it is
below $2^{-126}$, so at most $n$ weights totalling less than $n\,2^{-126}$
are lost, while $L_i\ge2^{m_i-Z_i}$ from the largest weight alone. The
relative change of $L_i$ is at most $n\,2^{-126-(m_i-Z_i)}$.
\end{proof}

On the generations of Section~\ref{sec:evaluation}, $m_i-Z_i$ lies in
$[-8.2,16.7]$ (Figure~\ref{fig:headroom}). With $n\le2^{22}$ and values
below $2^{16}$, the left side of the overflow condition is at most 55, far
below 127, and the flush bound is below $2^{-95}$.

\begin{proof}[Proof of Proposition~\ref{prop:grid}]
Round to nearest moves each term by at most $\tfrac12$, so
$|\hat g_c|\le\alpha|g_c|+\tfrac12$. Because
$\lceil\log_2B\rceil\ge\log_2B$, $\alpha\le2^b/B$, and
\[
\textstyle\sum_c|\hat g_c|\le\alpha\sum_c|g_c|+\tfrac C2\le2^b+\tfrac C2<2^{b+1}.
\]
Any partial sum, over any subset of the $\hat g_c$ in any order, is bounded by
the same quantity and so is representable in $(b+2)$-bit two's complement,
whose range is $[-2^{b+1},2^{b+1})$. Integer addition without overflow is
exact, hence associative and commutative, so every order and grouping gives
$\sum_c\hat g_c$. For the error,
$|\alpha^{-1}\sum_c\hat g_c-\sum_cg_c|\le\alpha^{-1}\sum_c|\hat g_c-\alpha g_c|
\le C/(2\alpha)$, and $\lceil\log_2B\rceil<\log_2B+1$ gives
$\alpha>2^{b-1}/B$, so $C/(2\alpha)<C\,B\,2^{-b}$.
\end{proof}

In the kernel, $g_c$ is the FP32 partial one CTA computes, with the power of
two $\alpha$ already applied inside the exponential, which is exact barring
underflow. The bound covers every partial because
Equation~\ref{eq:dq-bound} bounds the sum of the absolute values of all
terms of the element, of which each partial is a subset, and the $2^{-6}$
widening covers the BF16 rounding of $P$ and $dS$ that the computed partial
carries. The $dQ$ grid uses $b=30$ with 32-bit accumulators; the $dK$ grid,
and the $dV$ grid when $G\cdot S>16384$, use $b=61$ with 64-bit
accumulators.

\clearpage
\section{Algorithms}
\label{app:algorithms}

Algorithm~\ref{alg:decode} gives one decode CTA's loop over the 64-key tiles
of its split for one GQA row group, with BF16 values and two weight terms;
8-bit values add a gated second value plane, a finite depth with the block
tail adds a virtual row per block, and the 128-key tile holds two keys in
each row of the logit (Section~\ref{sec:decode}). Query
row $i$ has INT8 planes $q^a_i,q^b_i$ and scale $\eta_i$, key $j$ has planes
$a_j,b_j$ and scale $e_j$, and scores are in base 2.

\begin{algorithm}[H]
\caption{FoldAttention decode, one CTA}
\label{alg:decode}
\small
\begin{algorithmic}[1]
\Require references $Z_i$, depth $T$ ($\infty$ for dense), refine gate $\tau_K$, key tiles $1,\dots,N$ of this split, every $n$-th tile of the request for $n$ splits
\State $A\gets0\in\mathbb R^{D\times G}$, $L\gets0\in\mathbb R^{G}$ \Comment{FP32}
\State issue a bulk copy of tile 1's plane A and key scales
\For{$t=1,\dots,N$}
  \State wait for tile $t$'s plane A; issue tile $t{+}1$'s
  \State $c^{aa},c^{ab}\gets K^{a}_t\,[Q^{a};Q^{b}]^\top$ \Comment{INT8 WGMMA, keys on $M$}
  \State $\tilde s_{ij}\gets\eta_ie_j\,(c^{aa}_{ji}+c^{ab}_{ji}/256)-Z_i$ \Comment{coarse score against the fixed reference}
  \State $\mathrm{live}_j\gets\bigvee_i[\tilde s_{ij}\ge-T]$;\quad
         $\mathrm{refine}_j\gets\mathrm{live}_j\wedge\bigvee_i[\tilde s_{ij}\ge-\tau_K]$ \Comment{ORed over the group}
  \If{no key is live and there is no block tail}
    \State \textbf{continue}
  \EndIf
  \State gather $b_j$ for refined keys and $v_j$ for live keys
  \State $c^{ba},c^{bb}\gets K^{b}_t\,[Q^{a};Q^{b}]^\top$ \Comment{second INT8 WGMMA}
  \State $s_{ij}\gets\tilde s_{ij}+[\mathrm{refine}_j]\,\eta_ie_j\,(c^{ba}_{ji}+c^{bb}_{ji}/256)/256$
  \State $p_{ij}\gets[\mathrm{live}_j\wedge s_{ij}\ge-T]\;2^{s_{ij}}$ \Comment{final weight, FP32}
  \State add the weights of cut keys to the tile's virtual row \Comment{finite depth only}
  \State $P^{hi}\gets\operatorname{bf16}(p)$;\quad $P^{lo}\gets\operatorname{bf16}(p-P^{hi})$
  \State $A^\top\gets A^\top+V_t^\top P^{hi\top}+V_t^\top P^{lo\top}$ \Comment{BF16 WGMMA, channels on $M$}
  \State $L_i\gets L_i+\sum_jp_{ij}$
\EndFor
\State write $(A,L)$ to this split's slot \Comment{combine: $o_i=\sum_{\text{slots}}A_i\big/\sum_{\text{slots}}L_i$ in slot order}
\end{algorithmic}
\end{algorithm}

Algorithm~\ref{alg:backward} gives the backward for one work-list tile, a key
block $n$ of one head. A preprocess has computed $D_i=dO_i^\top o_i$, the
base-2 log-sum-exp $\ell_i$, and the grid exponent $\sigma=\log_2\alpha$ of
the tile's request and KV head from Equation~\ref{eq:dq-bound}; $c$ is the
softmax scale.

\begin{algorithm}[H]
\caption{FoldAttention backward, one key block}
\label{alg:backward}
\small
\begin{algorithmic}[1]
\Require $Q,dO\in\mathbb R^{S\times D}$, $K_n,V_n$, $\ell$, $D$, grid exponent $\sigma$
\State load $K_n,V_n$; $dK_n,dV_n\gets0$ \Comment{FP32 registers}
\For{each query block $m$ in the causal range of $n$}
  \State $S\gets Q_mK_n^\top$;\quad $\tilde P\gets\exp_2\bigl(c\log_2(e)\,S-\ell_m+\sigma\bigr)$ \Comment{$\tilde P=2^\sigma P$}
  \State $dV_n\gets dV_n+\tilde P^\top dO_m$
  \State $dP\gets dO_mV_n^\top$;\quad $d\tilde S\gets\tilde P\circ(dP-D_m)$
  \State $\widehat{dQ}\gets\operatorname{RNE}_{\mathrm{s32}}(d\tilde S\,K_n)$ \Comment{already on the grid}
  \State bulk integer add of $\widehat{dQ}$ into $dQ^{\mathrm{acc}}_m$ \Comment{order-free}
  \State $dK_n\gets dK_n+d\tilde S^\top Q_m$
\EndFor
\If{this CTA owns the key block's whole group}
  \State store $c\,2^{-\sigma}dK_n$ and $2^{-\sigma}dV_n$
\ElsIf{the group is split into subgroups}
  \State write the FP32 partial; the last subgroup to arrive sums all partials in subgroup order and stores
\Else
  \State round $dK_n,dV_n$ onto their integer grids and bulk-add them
\EndIf
\State postprocess: $dQ\gets c\,2^{-\sigma}dQ^{\mathrm{acc}}$, and likewise for integer $dK,dV$
\end{algorithmic}
\end{algorithm}

\clearpage
\section{Experimental details}
\label{app:protocol}

\paragraph{Baseline configurations.}
We use FlashAttention-3 3.0.0, FlashAttention-4 4.0.0b31, FlashInfer 0.7.0,
cuDNN 9.26, DASH at commit d87bcc9, vLLM 0.30.0, PAT at commit 8cb067f, and
sglang-kernel 0.4.7. Each library runs every configuration it offers at a
shape: paged and
contiguous caches, packed GQA, split counts 1--16 for FlashAttention-3, the
tensor-core and CUDA-core FlashInfer backends, and cuDNN through FlashInfer's
paged binding and through PyTorch SDPA. Configurations that fail are
dropped, the rest are timed over nine rounds, and each library is
represented by its fastest.

\paragraph{Timing.}
The selected baselines and FoldAttention are then timed together by CUDA-graph
replay, with the L2 cache evicted by reading a 256\,MB buffer before every
sample and the kernel order rotated with a stride coprime to the number of
kernels. We run at least 21 rounds, drop the first, and report medians of
per-round paired ratios.

\paragraph{Accuracy reference.}
Decode outputs are scored against FP32 attention over the same BF16 inputs,
and backward gradients against FP32 gradients computed one head at a time
without TF32. A kernel whose error exceeds 20 times the best error at its
precision is treated as computing a different function and excluded from
every ratio.

\paragraph{DASH.}
DASH and FlashAttention-3 register the same PyTorch operators and cannot be
loaded together. DASH therefore runs in its own process with FlashAttention-4 and
FoldAttention, and its times are joined to the FlashAttention-3 process
through FoldAttention's time in each. FlashAttention-4's ratio to
FoldAttention agrees between the two processes to within 1\% at 59 of 68
shapes and 2.5\% at every shape, which bounds the join's error. DASH was built
from commit d87bcc9
with a one-line patch restoring FlashAttention-3's masking bound for the
diagonal block of its reversed causal loop; without it, DASH's head-dim-64
causal gradients have 15--60$\times$ the error of every other kernel.
DASH has no variable-length schedule and is measured on dense batches only.

\paragraph{Shared-prefix baselines.}
vLLM's cascade path runs FA-3 over the prefix the whole batch holds, FA-3 over
each request's remainder, and a state merge, so a prefix tree cascades over
its root. PAT launches on the legacy default stream, which a CUDA graph
cannot capture, and is timed eagerly with the same L2 eviction. At $D=64$
with $G=8$, its output fails the accuracy criterion even on random inputs,
and one run ended in an illegal memory access. We therefore report no
gpt-oss-20b cells. On Qwen3-30B-A3B,
the accuracy rule drops it in one cascade cell, and in two others its error is
7--8 times the BF16 kernels'; without those two, its cascade range is
1.18--1.48$\times$. FastTree fails the accuracy criterion on three
single-prefix cascades, which are omitted.

\clearpage
\section{Additional decode results}
\label{app:decode}

\subsection{Latency against error on every generation}
Figure~\ref{fig:pareto} plots latency against error for every generation of
Table~\ref{tab:decode}, along the depth dial.

\begin{figure}[H]
\centering
\includegraphics[width=5.5in]{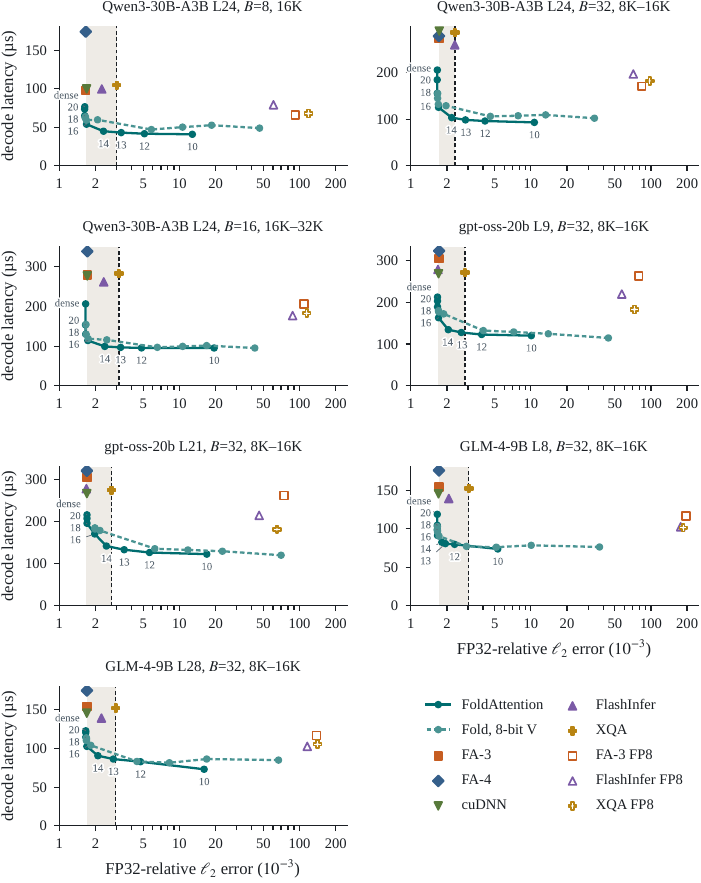}
\caption{Decode latency against FP32-relative error on the seven generations
of Table~\ref{tab:decode}. FoldAttention is a curve over depth $T$, BF16
(solid) or 8-bit (dashed) values, with each point's $T$ labeled on the BF16
curve; the 8-bit curve's points are the same depths in the same order.
Libraries are points, FP8 caches hollow.
The shaded band spans the BF16 kernels' errors. On gpt-oss-20b layer 21, no
FoldAttention variant meets the most accurate BF16 kernel's error.}
\label{fig:pareto}
\end{figure}

\clearpage
\subsection{Context sweeps}
Figure~\ref{fig:sweep-full} extends the sweep of Figure~\ref{fig:sweep} to
every measured group size and adds depth 14.

\begin{figure}[H]
\centering
\includegraphics[width=5.5in]{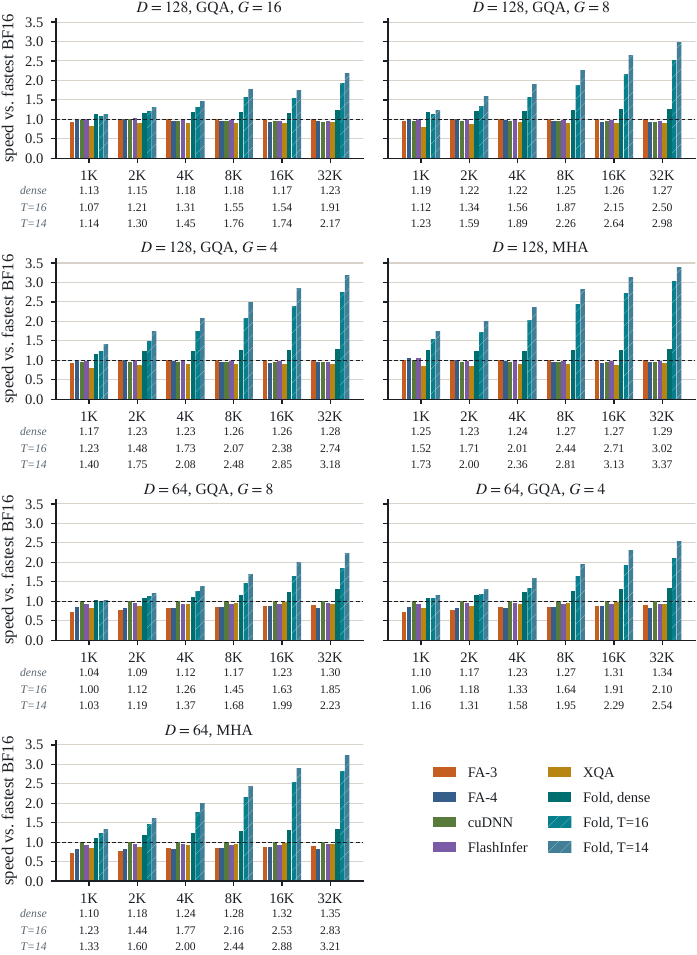}
\caption{The sweep of Figure~\ref{fig:sweep} for every measured group size
(ragged batches of 256 KV heads). At $D=64$, groups of one and four run on
the 128-key tile.}
\label{fig:sweep-full}
\end{figure}

\clearpage
Figure~\ref{fig:sweep-uniform} repeats the context sweep
on uniform batches, where FlashInfer or cuDNN is usually the fastest
baseline and dense FoldAttention is 0.95$\times$ at $D=64$, $G=8$ and 1K
context and 1.03--1.31$\times$ elsewhere.

\begin{figure}[H]
\centering
\includegraphics[width=5.5in]{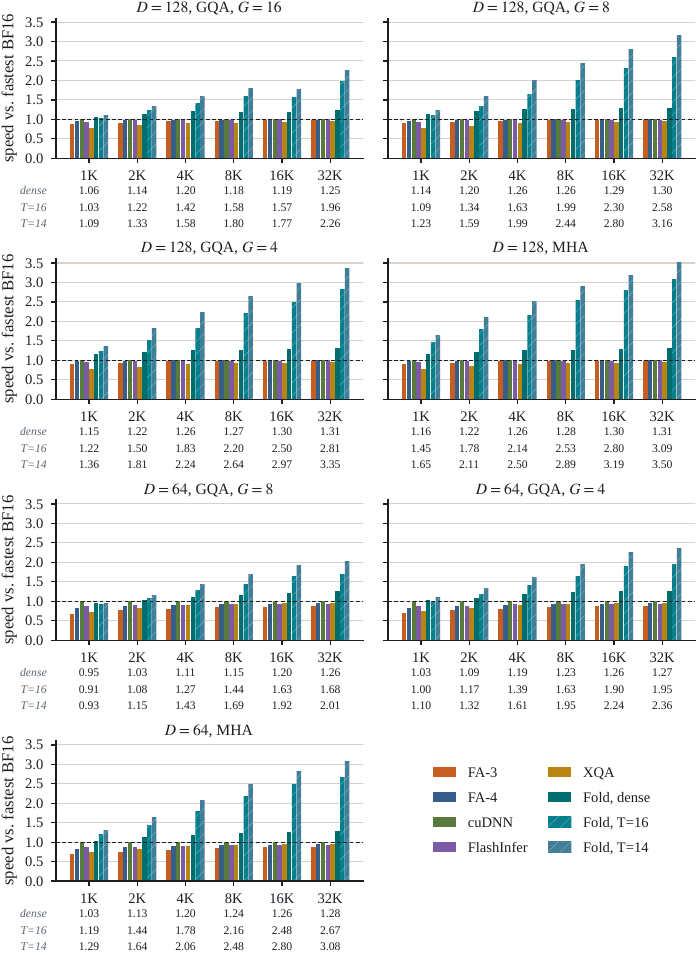}
\caption{The sweep of Figure~\ref{fig:sweep-full} on uniform batches.}
\label{fig:sweep-uniform}
\end{figure}

\clearpage
Figure~\ref{fig:sweep-v8} stores values in two
E4M3 planes, which only the 64-key tile reads; dense decode with 8-bit values
reaches 1.59--1.86$\times$ at 32K.

\begin{figure}[H]
\centering
\includegraphics[width=5.5in]{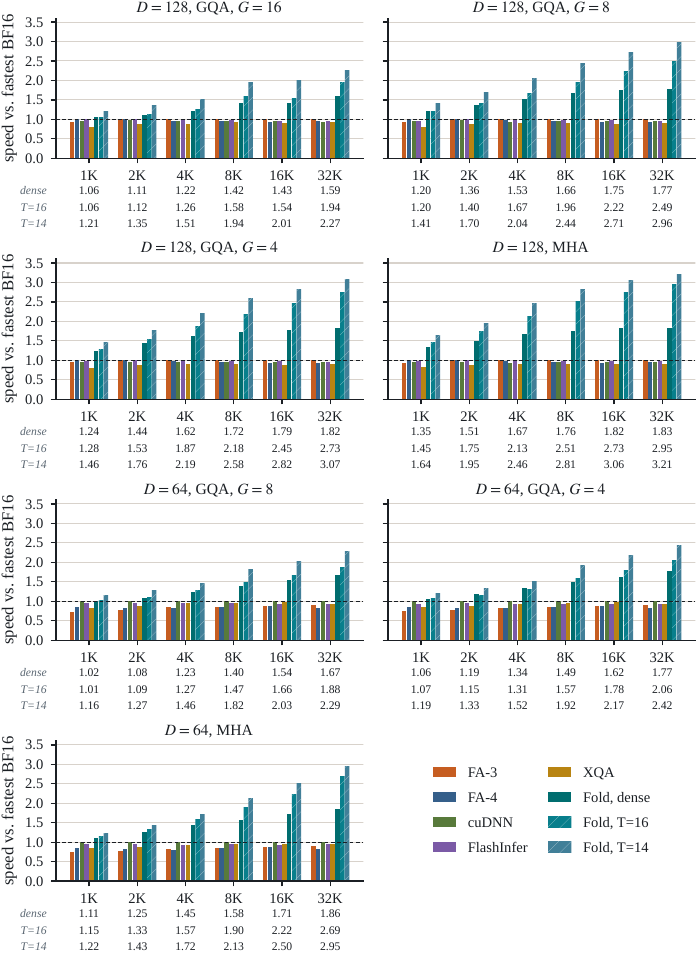}
\caption{The sweep of Figure~\ref{fig:sweep-full} with 8-bit values. ``Dense''
reads every value row in two E4M3 planes; baselines read BF16 caches.}
\label{fig:sweep-v8}
\end{figure}

\subsection{Reference quality}
Figure~\ref{fig:headroom} shows how far the estimate of
Section~\ref{sec:contract} lands from the truth over every row of the seven
generations: the true log-sum-exp sits $-0.4$ to $17.2$ binades above $Z_i$,
1.0 at the median. Decoding Qwen3-8B for real at 64K and 128K context with
YaRN (Section~\ref{sec:eval-e2e}), it sits $-0.9$ to $28.2$ binades above
$Z_i$. Over all 1.8 billion rows decoded in Qwen3-8B at 8K--128K, including
LongBench, 118 fall below the certificate's window $[2^{-1},2^{100}]$, each
with $Z_i$ 1.0--1.6 binades above the log-sum-exp; none exceeds it.

A better estimate would buy little. We drive one kernel-level call with two references: the mass estimate of
Section~\ref{sec:contract}, and each row's exact log-sum-exp of its FP32
scores. Refine gates, weight terms, the cut model (the running mean value)
and depths 10--20 are the same for both; only $Z_i$ differs. Cells are the
four of Table~\ref{tab:ablation} at 4K and 16K on uniform batches of 256 KV
heads. Because the estimate sits below the log-sum-exp, a given depth keeps
more keys under it and has lower error, so we compare at matched error: for
each point of the estimate we interpolate the oracle's bytes per key, linear
in log error between adjacent depths. At depths 12--14 the oracle needs
0.94--0.99 of the estimate's bytes (0.97 by geometric mean), with its largest
saving on GLM-4-9B at 16K. In dense decode the two read within 3\% of each
other's bytes and within 1.1\% of each other's error.

\subsection{Tile layouts}
\label{app:tiles}
At $D=64$, a group of up to four rows with BF16 values leaves most of the
product's $N$ idle, while each tile pays its copies, round trips, and barriers
for 64 keys. These groups walk 128-key tiles: each row of $M$ holds two keys
of plane A as stored, and each query takes two columns, $[q\,|\,0]$ and
$[0\,|\,q]$, in the idle part of $N$. The products are those of two 64-key
tiles, so the logits are the same bits, and each copy, round trip, and
barrier serves 128 keys. At $D=128$ the wider tile halves the CTAs an SM
holds and gains nothing.

A cascade level stacks the query rows of the requests that hold it, and up to
64 rows fit the decode kernel's $N$. Past 64 they fill $M$ on their own, and
the level runs on a kernel laid out as prefill is, with 64 query rows and 64
keys per tile and two consumer warpgroups taking turns at the tensor cores.
Its logit is one FP16 product of $\operatorname{fp16}(a_j+b_j/256)$, rounded
once per prefix, against the rows' queries; 11 bits put each logit within
about $2^{-11}$ of $|q||k|$, below the BF16 rounding of the weight. Draft
batches past 64 rows run on the same kernel.

\subsection{Cache construction}
Writing a prompt into the dense cache takes 0.18--0.25\,ms per layer
(0.6--11\% of FlashAttention-4's prompt attention) against 0.05--0.09\,ms
for a BF16 page append. A finite depth with BF16 values also needs the block
model, which the host fits in 6.6--11.3\,ms per layer per prefill, 0.2 to 5.3
times the prompt's attention from 32K down to 2K tokens. With 8-bit values the
cut model is a running mean, and a finite depth costs nothing at prefill.

\subsection{Sparse and quantized decode at matched error}
\label{app:methods}
Figure~\ref{fig:methods} places sparse and quantized decode on the final
state of each generation at one byte model. Each method is emulated in exact
arithmetic, and every byte it reads is counted, metadata included: Quest's
per-page key bounds, the block scales and residuals of Faster Flash
Decoding, KIVI's group scales and its recent tokens kept in BF16. Quest keeps
the sink and last page and takes the top pages to a token budget, shared by
the group or per head; Faster Flash Decoding keeps sub-blocks whose screened
score reaches the pseudo-maximum of the first and last blocks less $\delta$,
and drops the others' mass. FoldAttention's points are its kernel's own
counts. The same emulation gives the running-maximum gate of
Section~\ref{sec:eval-ablation}: each split's maximum is taken over its
64-key tiles up to the current one, and the kernel's refine and depth gates
are applied against it with the same coarse logits.

\begin{figure}[H]
\centering
\includegraphics[width=5.5in]{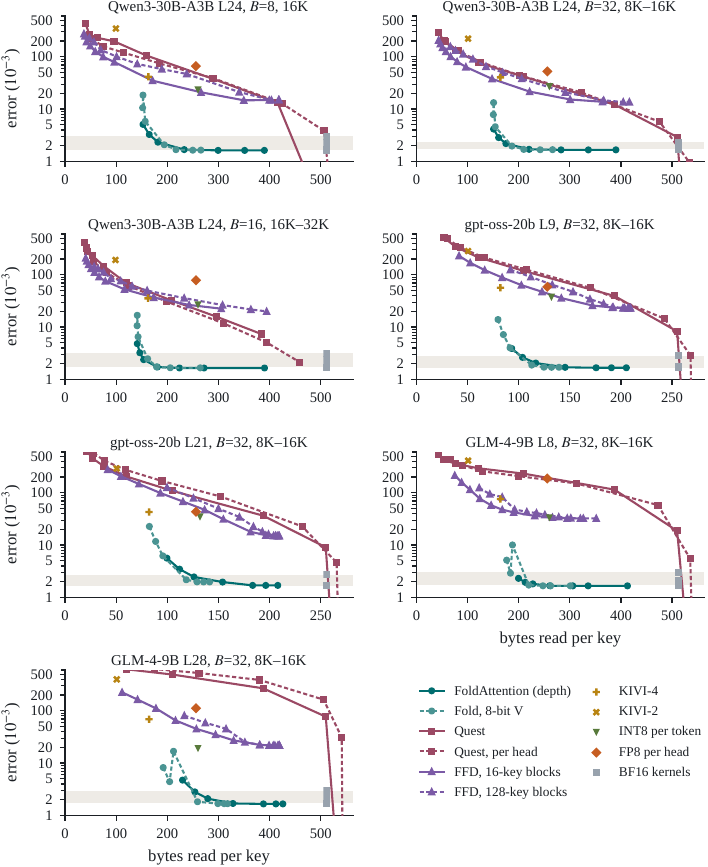}
\caption{FP32-relative error against bytes read per key on the final state
of each generation, every method emulated in exact arithmetic with its
metadata counted. The shaded band spans the BF16 kernels' errors, which read
$4D$ bytes per key. FoldAttention is a curve over depth $T$, from dense at
the right; Quest and Faster Flash Decoding are curves over their budgets.}
\label{fig:methods}
\end{figure}

\subsection{Cascades and drafts}
\label{app:compose}
The cascade takes its references from the flat decode's estimate, because the
estimate's prepass sees only the level it runs on. Charging the flat decode's
whole front to it, while the baselines' times exclude their own appends,
gives 0.90--1.23$\times$ (1.07$\times$) over the fastest kernel on cascades
and 0.79--1.13$\times$ (0.98$\times$) on trees. Table~\ref{tab:compose}
gives the kernel-level speedups over each prefix-sharing kernel.
FlashInfer's own paged decode is 2.3--7.9$\times$
faster than its cascade wrapper at these sizes. On draft verification (both
models, 8 or 32 requests, a 4K or 16K cache), dense FoldAttention is
1.07--1.24$\times$ faster than the fastest library for chains of two drafts,
0.99--1.17$\times$ for four, 0.58--0.84$\times$ for eight, and
0.43--0.74$\times$ for sixteen, and 0.71--1.09$\times$ and 0.72--1.14$\times$
against SGLang's tree verification for trees of eight and sixteen. The drafts
share every byte the kernel reads, but each adds a column to every product, so
the work grows with the number of drafts while the traffic does not.

\begin{table}[H]
\centering
\caption{Cascade speedup of dense FoldAttention over each kernel: the range
of the kernel's time over ours, with the geometric mean in parentheses.
``Fastest'' takes the fastest kernel of any kind in each cell. PAT has no
gpt-oss-20b cells and FastTree no result on three cascades
(Appendix~\ref{app:protocol}).}
\label{tab:compose}
\scriptsize
\setlength{\tabcolsep}{4pt}
\begin{tabular}{lccccc}
\toprule
Set & FlashInfer cascade & vLLM cascade & PAT & FastTree & Fastest \\
\midrule
Cascades (16) & 3.93--8.88 (6.18) & 1.08--1.76 (1.34) & 1.13--1.48 (1.32) & 1.23--1.95 (1.61) & 1.04--1.42 (1.22) \\
Prefix trees (8) & 3.38--8.88 (5.27) & 1.48--2.88 (1.80) & 0.90--1.29 (1.07) & 1.02--1.71 (1.24) & 0.90--1.33 (1.09) \\
\bottomrule
\end{tabular}
\end{table}

\clearpage
\section{Additional backward results}
\label{app:backward}

Figures~\ref{fig:backward-causal} and~\ref{fig:backward-full} give the grid
FlashAttention-3 and DASH report: MHA and GQA with groups of eight, with and
without a causal mask. Table~\ref{tab:bwd-shapes} gives every model
shape of Table~\ref{tab:backward}.

\begin{figure}[H]
\centering
\includegraphics[width=5.5in]{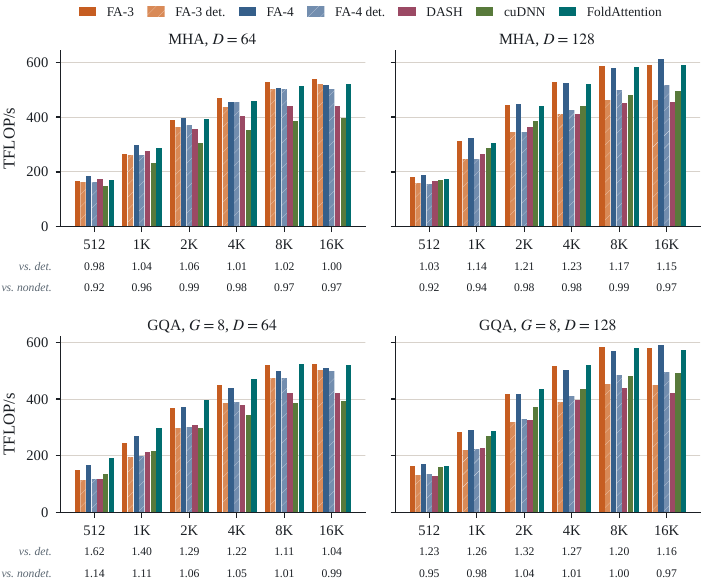}
\caption{Backward throughput with a causal mask on the grid FlashAttention-3
and DASH report (16K tokens per batch, model width 2048), for MHA and for GQA
with eight query heads per KV head. Hatched bars are deterministic modes. The
table under each panel is FoldAttention's speedup over the fastest
deterministic and the fastest nondeterministic kernel.}
\label{fig:backward-causal}
\end{figure}

\begin{figure}[H]
\centering
\includegraphics[width=5.5in]{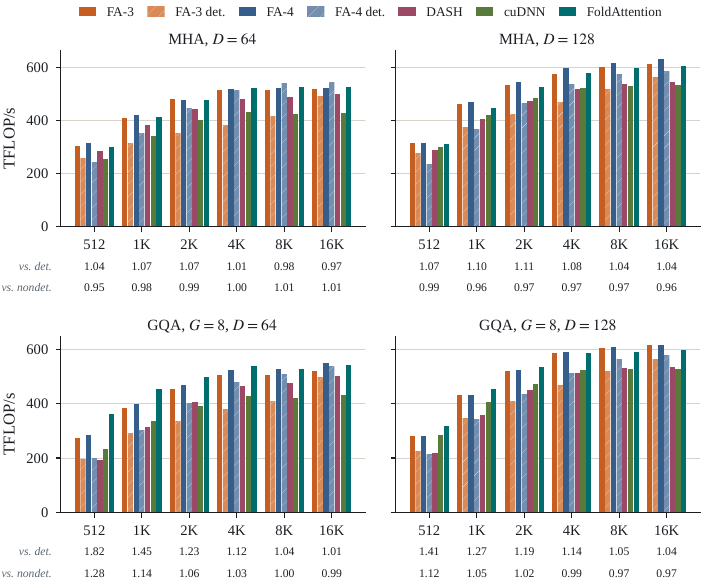}
\caption{The grid of Figure~\ref{fig:backward-causal} without a mask.}
\label{fig:backward-full}
\end{figure}

\begin{table}[H]
\centering
\caption{Backward time of FoldAttention and each baseline's time over it.
$H/H_{KV}$ gives query and KV heads. Shapes marked $\dagger$ are below 16K
tokens per batch and are excluded from the headline geometric mean.}
\label{tab:bwd-shapes}
\scriptsize
\begin{tabular}{rrrrlrrrrrrr}
\toprule
$B$ & $H/H_{KV}$ & $S$ & $D$ & mask & ours ($\mu$s) & FA-3 & FA-3 det. & FA-4 & FA-4 det. & cuDNN & DASH \\
\midrule
2 & 32/8 & 8192 & 128 & causal & 4942 & 1.03 & 1.29 & 1.04 & 1.19 & 1.21 & 1.37 \\
4 & 32/8 & 4096 & 128 & causal & 2599 & 1.05 & 1.35 & 1.05 & 1.28 & 1.20 & 1.37 \\
8 & 32/8 & 2048 & 128 & causal & 1548 & 1.10 & 1.38 & 1.08 & 1.34 & 1.18 & 1.39 \\
16 & 32/8 & 1024 & 128 & causal & 1056 & 1.21 & 1.43 & 1.16 & 1.39 & 1.18 & 1.41 \\
2 & 32/32 & 8192 & 128 & causal & 4865 & 1.00 & 1.25 & 0.99 & 1.14 & 1.19 & 1.30 \\
2 & 32/8 & 8192 & 128 & full & 9026 & 1.03 & 1.17 & 1.00 & 1.07 & 1.16 & 1.16 \\
2 & 64/8 & 8192 & 64 & causal & 5438 & 1.02 & 1.11 & 1.07 & 1.11 & 1.36 & 1.24 \\
8 & 64/8 & 2048 & 64 & causal & 1660 & 1.12 & 1.39 & 1.09 & 1.37 & 1.38 & 1.32 \\
1 & 8/2 & 8192 & 128 & causal$^\dagger$ & 571 & 1.02 & 1.47 & 1.17 & 1.35 & 1.36 & 1.36 \\
1 & 8/2 & 2048 & 128 & causal$^\dagger$ & 83 & 1.14 & 1.38 & 1.08 & 1.28 & 1.27 & 1.58 \\
1 & 16/2 & 4096 & 64 & causal$^\dagger$ & 197 & 0.99 & 1.30 & 1.16 & 1.30 & 1.51 & 1.24 \\
\bottomrule
\end{tabular}
\end{table}

\begin{table}[H]
\centering
\caption{Per-element gradient error against FP64 on attention operands
captured from the training run of Section~\ref{sec:eval-e2e} (a
Llama-3.2-1B-shaped model, $H/H_{KV}=32/8$, $D=64$, 8192 tokens), for
FoldAttention and FlashAttention-3. $\ell_2$ is the $dQ$ error in $10^{-3}$;
p99 is the 99th percentile of per-element relative error. The binade columns
give FoldAttention's median relative error over FlashAttention-3's among
$dQ$ elements that many binades below the largest of their request and KV
head; the last columns cover elements $2^{-20}$ or more below it. $dK$ and
$dV$ match FlashAttention-3's $\ell_2$ error to three digits in every
layer.}
\label{tab:grad-elements}
\scriptsize
\setlength{\tabcolsep}{4pt}
\begin{tabular}{lcccccc}
\toprule
Step, layer & $\ell_2$ Fold / FA-3 & p99 Fold / FA-3 & 0 to $-13$ & $-14$ to $-19$ & share $\le-20$ & median $\le-20$, Fold / FA-3 \\
\midrule
500, 0 & 2.99 / 2.99 & 1.45 / 1.52 & 1.00--1.03 & 1.07--1.12 & 2.7\% & 0.58 / 0.46 \\
500, 7 & 5.01 / 5.01 & 12.5 / 19.0 & 1.00 & 1.00 & 6.3\% & 1.12 / 2.20 \\
500, 15 & 5.01 / 5.01 & 12.0 / 14.8 & 1.00 & 1.00 & 5.7\% & 1.30 / 2.03 \\
1999, 0 & 3.36 / 3.36 & 1.23 / 1.25 & 1.00--1.04 & 1.08--1.21 & 2.1\% & 0.40 / 0.31 \\
1999, 7 & 5.67 / 5.67 & 3.17 / 3.27 & 1.00 & 1.00 & 0.9\% & 2.26 / 2.67 \\
1999, 15 & 4.43 / 4.44 & 1.69 / 1.54 & 1.00 & 1.00--2.70 & 2.2\% & 0.46 / 0.0047 \\
\bottomrule
\end{tabular}
\end{table}

\begin{figure}[H]
\centering
\includegraphics[width=5.5in]{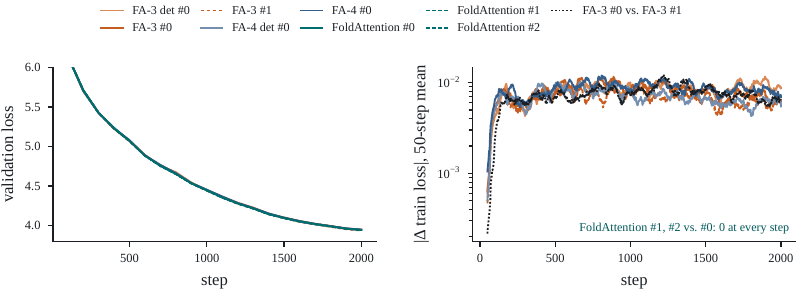}
\caption{A 1B Llama trained from scratch on WikiText-103 for 2000 steps of
16K tokens with each library's attention, from the same initialization and
data order. Left: validation loss. Right: each run's training loss against
FoldAttention's first run, as a 50-step mean of the absolute difference.
FoldAttention's second and third runs match its first at every step and have
no point on the log scale; the dotted line is FlashAttention-3 against itself.}
\label{fig:train-curve}
\end{figure}

\clearpage
\section{Determinism}
\label{app:determinism}

As DASH and TBIK do~\citep{qiang2026dash,zhang2025tbik},
Table~\ref{tab:determinism} measures the largest change in any gradient
element of one request when it is rerun, batched, or packed.

\begin{table}[H]
\centering
\caption{Largest absolute difference in any $dQ,dK,dV$ element of one
request against its first result, over three dense shapes and two packed
ones (0 means every bit matched). ``Repeatable'' counts backward benchmark shapes
with identical bits over three calls; error is the worst FP32-relative
$\ell_2$ over $dQ,dK,dV$ in $10^{-3}$.}
\label{tab:determinism}
\scriptsize
\begin{tabular}{lccccc}
\toprule
Kernel & Ten runs & Alone vs.\ batch of 4 & Alone vs.\ packed & Repeatable & Worst error \\
\midrule
FA-3 & $3.9\times10^{-3}$ & $2.0\times10^{-3}$ & $2.0\times10^{-3}$ & 1/73 & 2.53 \\
FA-4 & $2.0\times10^{-3}$ & $4.9\times10^{-4}$ & $2.4\times10^{-4}$ & 3/141 & 2.53 \\
cuDNN & $4.9\times10^{-4}$ & $4.9\times10^{-4}$ & -- & 11/136 & 3.02 \\
FA-3 det. & 0 & 0 & 0 & 73/73 & 2.53 \\
FA-4 det. & 0 & 0 & 0 & 141/141 & 2.53 \\
DASH & 0 & 0 & -- & 68/68 & 2.52 \\
FoldAttention & 0 & 0 & 0 & 141/141 & 2.53 \\
\bottomrule
\end{tabular}
\end{table}

Table~\ref{tab:decode-inv} measures decode: the same request run twice, and
run alone instead of inside a ragged batch. A fixed number of keys per split
also makes decode invariant to the split count; with the best such chunk per
model, the generations' steps are 2\% faster to 6\% slower than at the
default split.

\begin{table}[H]
\centering
\caption{Largest absolute output difference for the first request of a
ragged batch of 32 at 16K context, run twice, and run alone instead of in
the batch. FoldAttention's default split is chosen from the batch (8 splits
in the batch, 64 alone); a fixed split of 4 makes it batch invariant.
Changing the fixed split from 4 to 8 changes the combine's grouping and the
bits. cuDNN 9.26's paged decode is not repeatable on the $D=128$ batch.}
\label{tab:decode-inv}
\scriptsize
\begin{tabular}{lcccc}
\toprule
& \multicolumn{2}{c}{Qwen3-30B, $D=128$} & \multicolumn{2}{c}{gpt-oss-20b, $D=64$} \\
Kernel & repeat & alone vs.\ batch & repeat & alone vs.\ batch \\
\midrule
FA-3 & 0 & $4.9\times10^{-4}$ & 0 & $1.6\times10^{-2}$ \\
FA-4 & 0 & 0 & 0 & 0 \\
FlashInfer & 0 & $2.0\times10^{-3}$ & 0 & $1.6\times10^{-2}$ \\
XQA & 0 & $2.0\times10^{-3}$ & 0 & $3.1\times10^{-2}$ \\
cuDNN & $2.4\times10^{-4}$ & $9.8\times10^{-4}$ & 0 & $1.6\times10^{-2}$ \\
FoldAttention, default split & 0 & $1.5\times10^{-8}$ & 0 & $7.6\times10^{-6}$ \\
FoldAttention, fixed split & 0 & 0 & 0 & 0 \\
\bottomrule
\end{tabular}
\end{table}

\end{document}